\documentclass[10pt,twocolumn]{article} 

\usepackage{simpleConference}
\usepackage{times}
\usepackage{graphicx}
\usepackage{amssymb}
\usepackage{amsmath}
\PassOptionsToPackage{hyphens}{url}\usepackage{hyperref}
\usepackage[numbers]{natbib}
\usepackage{subcaption}
\usepackage{amsthm}
\usepackage{amsbsy}
\usepackage{multirow}
\usepackage{booktabs}
\usepackage{authblk}
\usepackage{color}
\usepackage{float}

\newtheorem{theorem}{Theorem}

\title{To Trust or Not To Trust Prediction Scores for Membership Inference Attacks}

\makeatletter
\newcommand{\printfnsymbol}[1]{%
  \textsuperscript{\@fnsymbol{#1}}%
}
\makeatother

\author[1]{Dominik Hintersdorf\thanks{equal contribution}}
\author[1]{Lukas Struppek\printfnsymbol{1}}
\author[1,2]{Kristian Kersting}
\affil[1]{Department of Computer Science, Technical University of Darmstadt, Germany}
\affil[2]{Centre for Cognitive Science, TU Darmstadt, and Hessian Center for AI (hessian.AI), Germany}
\affil[ ]{\{dominik.hintersdorf, lukas.struppek, kersting\}@cs.tu-darmstadt.de}

\begin{document}

\maketitle
\thispagestyle{empty}

\begin{abstract}
Membership inference attacks (MIAs) aim to determine whether a specific sample was used to train a predictive model. Knowing this may indeed lead to a privacy breach. Most MIAs, however, make use of the model's prediction scores---the probability of each output given some input---following the intuition that the trained model tends to behave differently on its training data. We argue that this is a fallacy for many modern deep network architectures. Consequently, MIAs will miserably fail since overconfidence leads to high false-positive rates not only on known domains but also on out-of-distribution data and implicitly acts as a defense against MIAs. Specifically, using generative adversarial networks, we are able to produce a potentially infinite number of samples falsely classified as part of the training data. In other words, the threat of MIAs is overestimated, and less information is leaked than previously assumed. Moreover, there is actually a trade-off between the overconfidence of models and their susceptibility to MIAs: the more classifiers know when they do not know, making low confidence predictions, the more they reveal the training data.\footnote{Published as a conference paper at IJCAI-ECAI 2022.}
\end{abstract}

\section{Introduction}\label{sec:introduction}

Deep learning models achieve state-of-the-art performances in various tasks such as computer vision, language modeling, and healthcare. However, large datasets are needed to train these models. Collecting and, in particular, cleaning and labeling data is expensive. Hence, users may look for alternative data sources, which may not always be legal ones.
To detect data abuse, it would be desirable to prove whether a model was trained on leaked or unauthorized retrieved data. However, to prove that a specific data point was part of the training set is difficult since neural networks do not store plain training data like lazy learners. Instead, the learned knowledge is encoded into the network's weights.

One way to distinguish between unseen data and data points used for training the neural networks is through membership inference attacks (MIAs). They attempt to identify training samples in a large set of possible inputs. Besides malicious intentions, MIAs might be used to prove illegal data abuse in deep learning settings. To use membership inference results as evidence in court, high accuracy and robustness to different data types and network architectures is required.
\begin{figure}[t]
\centering
\includegraphics[width=.83\linewidth]{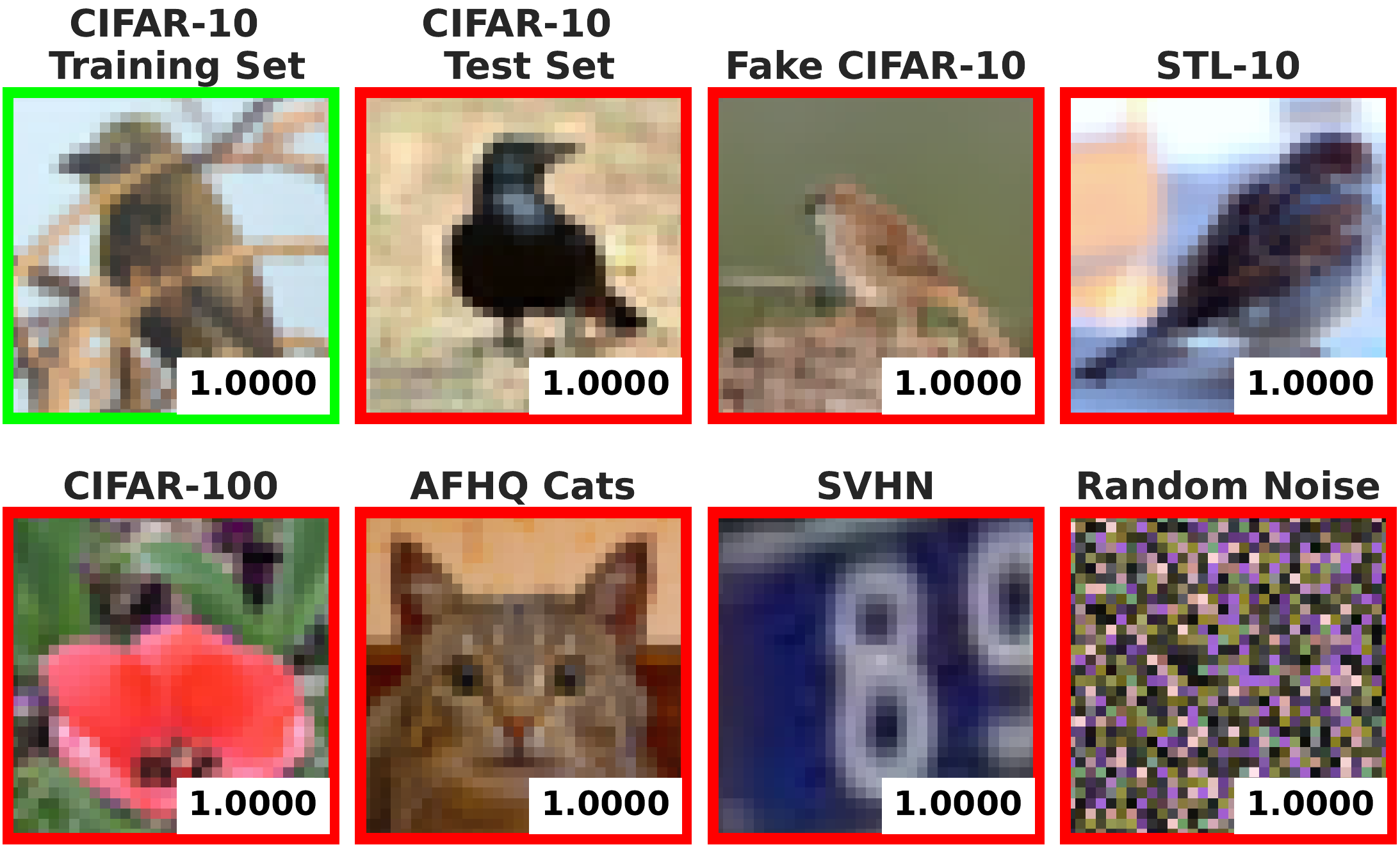}
\caption{False-positive membership inference attacks (red frames) against a ResNet-18 and their assigned maximum prediction scores.
\label{fig:false_positive_samples}}
\vskip -0.3in
\end{figure}

Previous works on MIAs, see e.g., ~\citet{shokri_mi2016} and \citet{salem2018mlleaks}, state strong attack results in distinguishing between training and test data, and give the impression that MIAs have a strong impact on a model's privacy. However, the evaluation of MIAs reported in the literature is usually done with limited data in a cross-validation setting, i.e., on samples from the exact same data distribution, not considering other distributions with possibly similar image contents.
 
We argue that MIAs, in particular attacks based on a model's prediction scores, are not robust and not very meaningful in realistic settings, due to their high false-positive rates, also criticized by \citet{rezaei2021difficulty}. We take, however, a broader view and do not restrict evaluation on the target model's exact training distribution. In a specific domain, there is a possibly infinite number of samples and hence the number of false positives can be increased arbitrarily. This leads to reduced informative value and low reliability of the attacks under realistic conditions. Fig.~\ref{fig:false_positive_samples} shows samples from various datasets for which all three MIAs studied in this paper make false-positive predictions, even if the inputs are nothing similar to the training data or do not contain any meaningful information at all. We practically demonstrate the theoretically unlimited number of false-positive member classifications by using a GAN to generate images following the training distribution. 

Our argumentation is based on the already known overconfidence of modern deep neural architectures~\citep{nguyen2014easyfool,hendrycks2016baseline,guo2017calibration,leibig2017sr}. However, overconfidence has consistently been ignored in the MIA literature, even though MIA findings are already having an impact on regulatory and other legal measures. Our experimental results indicate that mitigating the overconfidence of neural networks using calibration techniques increases privacy leakage. 

We argue that previous works performed misleading attack evaluations and overestimated the actual attack effectiveness by using only data from the target model's exact training distribution. Actually, there might not exist any meaningful MIA at all since the attacks will always produce a high number of false positives due to the overconfidence of neural networks.

To summarize, we make the following contributions:
\begin{enumerate}
\item We demonstrate that the effectiveness of MIAs has been systematically overestimated by ignoring the fact that most neural networks are inherently overconfident and, therefore, produce high false-positive rates.
\item We show that overconfidence acts as a natural defense against MIAs.
\item We reveal that a trade-off exists between keeping models secure against MIAs and mitigating overconfidence.
\end{enumerate}
We proceed as follows. We start off by reviewing MIAs and how overconfidence of neural networks can be mitigated. Afterward, we introduce the theoretical background and our experimental setup. Before discussing and concluding our work, we present our experimental results. 

\begin{figure}[t]
\centering
 \includegraphics[width=.83\linewidth]{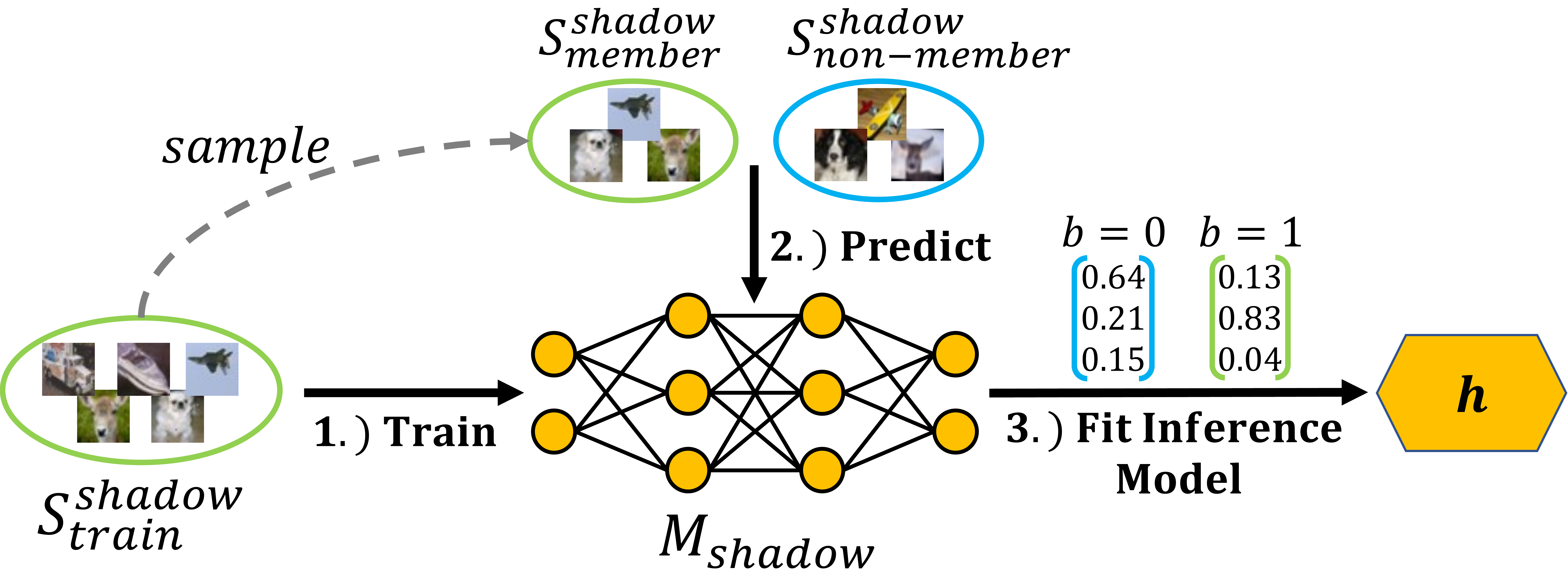}
\caption{Membership inference preparation process. 
\label{fig:shadow_training}}
\vskip -0.1in
\end{figure}

\begin{figure}[t]
    \centering
    \includegraphics[width=\linewidth]{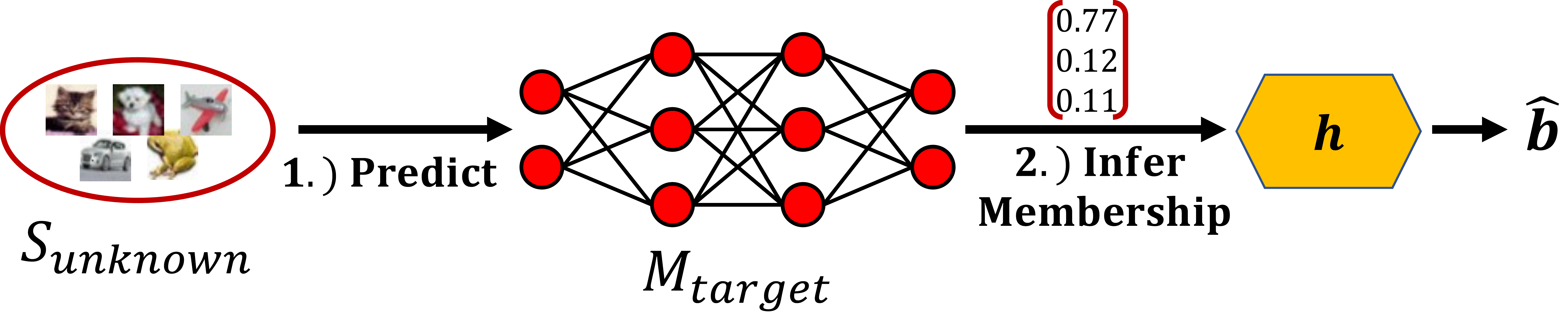}
    \caption{Application of inference model $h$ against a target model $M_{target}$. 
    The adversary first queries $M_{target}$ to collect prediction scores, and the inference model $h$ is then used to make a prediction $\hat{b}$ on the membership status.
    \label{fig:membership inference}}
    \vskip -0.1in
\end{figure}

\section{Membership Inference Attacks}\label{sec:mi_attacks}
Membership inference attacks (MIAs) on neural networks were first introduced by \citet{shokri_mi2016}. In a general MIA setting, as usually assumed in the literature, an adversary is given an input $x$ following distribution $D$ and a target model $M_{target}$ which was trained on a training set $S_{train}^{target} \sim D^n$ with size $n$. The adversary is then facing the problem to identify whether a given $x \sim D$ was part of the training set $S_{train}^{target}$. To predict the membership of $x$, the adversary creates an inference model $h$. In score-based MIAs, the input to $h$ is the prediction score vector produced by $M_{target}$ on sample $x$. Since MIAs are binary classification problems, precision, recall, false-positive rate (FPR), and area under the receiver operating characteristic (AUROC) are used as attack evaluation metrics in our experiments.

All MIAs exploit a difference in the behavior of $M_{target}$ on seen and unseen data. Most attacks in the literature follow \citet{shokri_mi2016} and train so-called shadow models $M_{shadow}$ on a disjoint dataset $S_{train}^{shadow}$ drawn from the same distribution $D$ as $S_{train}^{target}$. $M_{shadow}$ is used to mimic the behavior of $M_{target}$ and adjust parameters of $h$, such as threshold values or model weights. Note that the membership status for inputs to $M_{shadow}$ are known to the adversary. Fig.~\ref{fig:shadow_training} visualizes the attack preparation process. 

In recent years, various MIAs have been proposed. \citet{shokri_mi2016} trained multiple shadow models and queried each of the shadow models with its training data (members), as well as unseen data (non-members) to retrieve the prediction scores of the shadow models. Multiple binary classifiers were then trained for each class label to predict the membership status. \citet{salem2018mlleaks} also used prediction scores and trained a single class-agnostic neural network to infer membership. In contrast to \citet{shokri_mi2016}, their approach relies on a single shadow model. The input of $h$ consists of the $k$ highest prediction scores in descending order. 

Instead of focusing solely on the scores, \citet{Yeom_Privacy_Risk} took advantage of the fact that the loss of a model is lower on members than on non-members and fit a threshold to the loss values. More recent approaches \citep{choquettechoo2021labelonly,li2021membership} focused on label-only attacks where only the predicted label for a known input is observed. 

\section{Overconfidence of Neural Networks}\label{sec:overconfidence}
Neural networks usually output prediction scores, e.g., by applying a softmax function. To take model uncertainty into account, it is usually desired that the prediction scores represent the probability of a correct prediction, which is usually not the case. This problem is generally referred to as model calibration. \citet{guo2017calibration} demonstrated that modern networks tend to be overconfident in their predictions. \citet{Hein2019WhyRN} have further proven that ReLU networks are overconfident even on samples far away from the training data.

Existing approaches to mitigate overconfidence can be grouped into two categories: post-processing methods applied on top of trained models and regularization methods modifying the training process. As a post-processing method, \citet{guo2017calibration} proposed temperature scaling using a single temperature parameter $T$ for scaling down the pre-softmax logits for all classes. The larger $T$ is, the more the resulting scores approach a uniform distribution. \citet{kristiadi2020being} further proposed to approximate a model's final layer with a Laplace approximation. \citet{mueller2020does} demonstrated that label smoothing regularization \citep{szegedy2015rethinking} not only improves the generalization of a model but also implicitly leads to better model calibration. The calibration of a model can be measured by the expected calibration error (ECE) \citep{Naeini2015} and the overconfidence error (OE) \citep{overconfidence_error}. Both metrics compute a weighted average over the absolute difference between test accuracy and prediction scores while ECE penalizes the calibration gap and OE penalizes overconfidence.

\section{Do Not Trust Prediction Scores for MIAs}\label{sec:dont_trust_mia}
In this section, we will show that predictions scores for MIAs cannot be trusted because score-based MIAs make membership decisions based mainly on the maximum prediction score. As a first step, we mathematically motivate our argumentation and then verify our claims empirically.

Formally, a neural network $f(x)$ using ReLU activations decomposes the unrestricted input space $\mathbb{R}^m$ into a finite set of polytopes (linear regions). We can then interpret $f(x)$ as a piecewise affine function that is affine in any polytope~\citep{arora2018understanding}.
Due to the limited number of polytopes, the outer polytopes extend to infinity which allows to arbitrarily increase the prediction scores through scaling inputs by a large constant $\delta$~\citep{Hein2019WhyRN}. We now further develop these findings from an MIA point of view and state the following theorem:

\begin{theorem}\label{prop:high_fpr} 
Given a (leaky) ReLU-classifier, we can force almost any non-member input to be classified as a member by score-based MIAs, simply by scaling it by a large constant.
\end{theorem}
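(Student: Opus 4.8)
The plan is to build directly on the piecewise-affine decomposition already recalled in the text together with the unboundedness of the outer linear regions from \citet{Hein2019WhyRN}, and then to translate the resulting confidence blow-up into a statement about the decision rule of a score-based MIA. First I would fix an arbitrary non-member input $x \neq 0$ and consider the ray $\{\delta x : \delta > 0\}$. Since $f$ induces only finitely many polytopes and the outer ones are unbounded, there exist a threshold $\delta_0 > 0$ and a single polytope $Q$ such that $\delta x \in Q$ for every $\delta \geq \delta_0$. On $Q$ the classifier is affine, so its logits can be written as $f(\delta x) = \delta V x + a$ for a fixed slope matrix $V$ and bias $a$ determined by $Q$.

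Second, I would analyze the softmax of these logits. Writing $s = Vx$ and $k^\star = \arg\max_k s_k$, the maximum prediction score is
\begin{equation}
\operatorname{softmax}(f(\delta x))_{k^\star} = \frac{1}{\sum_{j} \exp\!\big(\delta (s_j - s_{k^\star}) + (a_j - a_{k^\star})\big)}.
\end{equation}
Provided the top slope is strict, i.e.\ $s_{k^\star} > s_j$ for all $j \neq k^\star$, every exponent in the denominator except the one for $j = k^\star$ tends to $-\infty$ as $\delta \to \infty$, so the maximum score converges to $1$. This is exactly where the qualifier \emph{almost any} enters: the exceptional inputs are those for which the dominant logit slope is not unique; since $s_j - s_k = (V_j - V_k)\cdot x$ for the rows $V_j$ of the region's slope matrix, such inputs lie in a finite union of hyperplanes $\{x : (V_j - V_k)\cdot x = 0\}$, a set of Lebesgue measure zero. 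For (leaky) ReLU the matrices $V$ are products of the weight matrices with diagonal gating matrices, so this degeneracy is non-generic, and the leaky slope additionally keeps every gate nonzero, ruling out the trivial collapse $Vx = 0$.

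Finally, I would connect the confidence blow-up to the attack. Since the text establishes that score-based MIAs base their membership decision essentially on the maximum prediction score, the inference model $h$ labels an input as a member once this score exceeds the threshold calibrated on the (high-confidence) shadow members. Because the maximum score along the ray can be driven arbitrarily close to $1$, there is a finite $\delta \geq \delta_0$ at which it surpasses any such threshold, forcing $\delta x$ to be classified as a member. The main obstacle I anticipate is not the limit computation but making the \emph{almost any} clause precise and uniform: I must argue that the non-degeneracy condition $s_{k^\star} > s_j$ fails only on a measure-zero set independently of which outer polytope the ray eventually lands in, and separately guard against the degenerate case where the surviving affine map is constant across classes, which is precisely where the leaky-ReLU assumption does the real work.
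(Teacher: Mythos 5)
Your proposal is correct and takes essentially the same route as the paper: the paper's own proof simply cites \citet{Hein2019WhyRN} (Lemma~3.1 and Theorem~3.1) for the limit $\lim_{\delta \rightarrow \infty} \max_{i} f(\delta x)_i = 1$, which is precisely the argument you reconstruct (the ray eventually remains in a single unbounded polytope, where the affine restriction plus a strictly dominant logit slope drives the softmax toward a one-hot vector), and then, exactly as you do, concludes that any threshold-based score attack with threshold below $1$ must output ``member''. The obstacle you flag at the end---that strict dominance of the top slope can fail on a set of positive measure if a region's slope matrix has identical rows---is handled in \citet{Hein2019WhyRN} by assuming no such identical rows as an explicit hypothesis rather than deriving it from the (leaky) ReLU structure, and the paper's appendix makes the matching concession on the attack side: the step from near-one confidence to $h(f(x))=1$ is rigorous only for threshold attacks (maximum score, entropy), whereas for learned inference models such as the top-3 network it is supported only empirically.
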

\begin{proof}
Let~${f:\mathbb{R}^m \rightarrow \mathbb{R}^d}$ be a piecewise affine (leaky) ReLU-classifier. We define a score-based MIA inference model ${h: \mathbb{R}^d \rightarrow \{0, 1\}}$ with $1$ indicating a classification as a member. For almost any input $x\in \mathbb{R}^m$ and a sufficiently small $\epsilon>0$ if $\max_{i=1,...,d} \,\, f(x)_i \geq 1-\epsilon$, it follows that $h(f(x)) = 1$. Since ${lim_{\delta \rightarrow \infty} \max_{i=1,...,d} f(\delta x)_i = 1}$, 
then $lim_{\delta \rightarrow \infty} h(f(\delta x)) {= 1}$ already holds. 
\end{proof}

See Appx.~\ref{app:proof} for an extended proof. 
By scaling the whole non-member dataset, one can force the FPR to be close to 100\%. Indeed, the theorem holds only for (leaky) ReLU-networks and unbounded inputs. However, since uncalibrated neural networks assign high prediction scores to a wide range of different inputs, the number of false-positive predictions is also large for unscaled inputs from known and unknown domains. Next, we empirically show that one cannot trust predictions scores for MIAs in more general settings without input scaling required and using other activation functions. \\ 

\subsection{Experimental Protocol}\label{sec:experiments}
We make our source code publicly available\footnote{Available at \href{https://github.com/ml-research/To-Trust-or-Not-To-Trust-Prediction-Scores-for-Membership-Inference-Attacks}{https://github.com/ml-research/To-Trust-or-Not-To-Trust-Prediction-Scores-for-Membership-Inference-Attacks}.} and provide further information for reproducibility in Appx.~\ref{app:additional_experimental_results}.

{\bf Threat Model.}\label{sec:threat_model}
As in most MIA literature~\citep{salem2018mlleaks,Yeom_Privacy_Risk,song_systematic_evaluation}, we followed the MIA setting of \citet{shokri_mi2016} and like \citet{salem2018mlleaks} only trained a single shadow model for each attack. As in previous work, we also simulate a worst-case scenario, i.e., the adversary knows the exact architecture and training procedure of the target model. Therefore, a strong shadow model can be trained, following the procedure depicted in Fig.~\ref{fig:shadow_training}. In our score-based MIA scenario, the adversary only has access to the target model's prediction scores.

{\bf Datasets.} We evaluated the attacks on models trained on the CIFAR-10~\citep{Krizhevsky09learningmultiple} and Stanford Dogs \citep{KhoslaYaoJayadevaprakashFeiFei_FGVC2011} datasets. 

We created two disjoint training datasets for the target and shadow models, each containing 12,500 (CIFAR-10) and 8,232 (Stanford Dogs) samples. We then randomly drew 2,500 and 2,058 samples, respectively, from the training and test sets to create the member and non-member datasets. 

We used various datasets to demonstrate the susceptibility of prediction score-based MIAs to high scores on samples from neighboring distributions and samples further away from the training data---a kind of out-of-distribution (OOD) setting. We used STL-10~\citep{pmlr-v15-coates11a}, CIFAR-100~\citep{Krizhevsky09learningmultiple}, SVHN~\citep{svhn}, and Animal Faces-HQ (AFHQ)~\citep{choi2020starganv2} as datasets.

Additionally, we used pre-trained StyleGAN2~\citep{karras2020training} models to generate synthetic CIFAR-10 and dog images, referred to as Fake CIFAR-10 and Fake Dogs. To empirically verify our theorem and push our approach to the extreme, we created two additional datasets based on the respective test images by scaling pixel values by factor 255 and randomly permuting the images' pixels to create random noise samples. In the following, we refer to these two datasets as Permuted and Scaled.

{\bf Neural Network Architectures.}\label{sec:model_architectures} On CIFAR-10, we trained a ResNet-18 \citep{resnet}, an EfficientNetB0 \citep{tan2020efficientnet} and a simple convolutional neural network following \citep{salem2018mlleaks}, referred to as SalemCNN.
For the Stanford Dogs dataset, we used a larger ResNet-50 architecture pre-trained on ImageNet. ResNets and SalemCNN are ReLU networks and can be interpreted as piecewise linear functions~\citep{arora2018understanding}.
EfficientNetB0 uses Swish activation functions \citep{ramachandran2017searching}, which are not piecewise linear and, therefore, our theorem does not hold. Nevertheless, we demonstrate that also non-ReLU networks suffer from overconfidence, leading to weak MIAs.

\begin{table}[t!]
    \centering
    \resizebox{\columnwidth}{!}{
    \begin{tabular}{lccc}
    \toprule
                                & \textbf{SalemCNN}     & \textbf{ResNet-18}    & \textbf{EfficientNetB0}  \\
    \midrule 
    Train Accuracy              & 100.00\%              & 100.00\%              & 99.03\%   \\
    Test Accuracy               & 59.04\%               & 69.38\%               & 71.06\%   \\
    \midrule 
    Entropy Pre                 & 65.51\%               & 67.35\%               & 61.36\%   \\
    Entropy Rec                 & 88.52\%               & 92.32\%               & 79.96\%   \\
    Entropy FPR                 & 46.60\%               & 44.76\%               & 50.36\%   \\
    Entropy AUROC               & 70.94\%               & 76.50\%               & 66.57\%   \\
    \midrule 
    Max. Score Pre              & 65.34\%               & 67.35\%               & 61.43\%   \\
    Max. Score Rec              & 87.48\%               & 92.32\%               & 79.64\%   \\
    Max. Score FPR              & 46.40\%               & 44.76\%               & 50.00\%   \\
    Max. Score AUROC            & 72.03\%               & 77.50\%               & 66.58\%   \\
    \midrule 
    Top-3 Scores Pre            & 62.48\%               & 63.84\%               & 60.74\%   \\
    Top-3 Scores Rec            & 100.00\%              & 98.04\%               & 82.60\%   \\
    Top-3 Scores FPR            & 60.04\%               & 55.52\%               & 53.40\%   \\ 
    Top-3 Scores AUROC          & 71.57\%               & 77.14\%               & 66.61\%   \\
    \bottomrule
    \end{tabular}
        }
    \caption{Training and attack metrics for the target models trained on CIFAR-10. We measure the attacks' precision (Pre), recall (Rec), FPR and AUROC  on equally-sized member and non-member subsets from CIFAR-10.}
    \label{tab:cifar10_model_results}
\end{table}

{\bf Prediction Score-Based Attacks. }\label{sec:confidence_mi_attacks}
We base our analysis on three different MIAs~\citep{salem2018mlleaks} exploiting the top-3 values of the prediction score vector, the maximum value, and the entropy. For the top-3 prediction score attack, we trained a small neural network with a single hidden layer as an inference model. It uses the three highest scores of $M_{target}$ in descending order as inputs. The maximum prediction score attack relies only on the highest score, while the entropy attack computes the entropy on the whole prediction score vector. An input sample is classified as a member, if the maximum value is higher or if the entropy is lower than a threshold. We fit all attack models on the shadow models' outputs, with the thresholds chosen to maximize the true-positive rate while minimizing the FPR. 

\subsection{Experimental Results}\label{sec:empirical_results}
We investigate the following questions: {\bf (Q1)} How robust are prediction score-based MIAs? {\bf (Q2)} Does overconfidence negatively affect MIAs? {\bf (Q3)} How does calibrating neural networks influence the success of MIAs? {\bf (Q4)} Are defenses contrary to calibration?  

{\bf (Q1) MIAs Are Not Robust.}\label{sec:mi_not_robust}
Tab.~\ref{tab:cifar10_model_results} summarize the test accuracy and attack metrics of the CIFAR-10 target models. The different attacks performed quite similarly while the recall is always significantly higher than the precision, indicating the problem of many false-positive predictions. A similar picture emerges when looking at the results of the Standard Stanford Dog model, stated in Tab.~\ref{tab:model_attack_metrics_dogs}. We state additional threshold-free metrics, including AUPR and FPR@95\%TPR, in Appx.~\ref{app:add_metrics}.

To examine the robustness of the attacks, we used the remaining datasets as non-member inputs and measured the FPRs. Figs.~\ref{fig:fpr_resnet18_cifar10_ls} and~\ref{fig:fpr_resnet18_cifar10_llla} (transparent bars), show the FPR of the attacks against the ResNet CIFAR-10 models, and Figs.~\ref{fig:fpr_resnet50_dogs_ls} and ~\ref{fig:fpr_resnet50_dogs_la} do the same for the Stanford Dogs models. We state visualizations for the other CIFAR-10 models, which behave similar to the ResNet-18 model, in Appx.~\ref{app:cifar10_fpr_results}.

The results demonstrate that the attacks not only tend to falsely classify samples from the test data as members but also samples from other distributions. For example, the attacks against CIFAR-10 misclassified more than a third of the STL-10 samples, which are similar in content and style, as members. The same holds for AFHQ Dogs samples as input for the Stanford Dogs model. The results on the remaining datasets, especially on the scaled samples, empirically confirm our theorem and demonstrate that neural networks are not able to recognize when they are operating on unknown inputs, such as housing numbers, cats, or random noise, and therefore still produce high FPRs. Even on generated Fake samples following the training distribution, the FPR is comparably high and shows that there exists a potentially infinite number of false-positive samples that are not out-of-distribution. This behavior is not limited to ReLU networks. The FPR of the EfficientNetB0 on the datasets is quite similar to the FPR of the ResNet-18. This indicates that the problem of high FPR in MIAs is affecting modern deep architectures in general and underlines the fact that MIAs are not robust.

\begin{table}[t!]
    \centering
    \resizebox{\columnwidth}{!}{
    \begin{tabular}{lc|cc|cc}
    \toprule
                            & \multicolumn{1}{c}{}  & \multicolumn{2}{c}{\textbf{Calibration}}              & \multicolumn{2}{c}{\textbf{Defenses}} \\
    \textbf{ResNet-50}      & \textbf{Standard}     & \textbf{LS}                           & \textbf{LA}   & \textbf{Temp}                         & \textbf{L2}           \\
    \midrule
    Train Accuracy          & 98.48\%               & 99.62\%                               & 98.48\%       & 98.48\%                               & 74.05\%                           \\
    Test Accuracy           & 59.69\%               & 64.65\%                               & 59.62\%       & 59.69\%                               & 48.15\%                           \\
    ECE                     & \textbf{25.09\%}      & $\pmb{\downarrow}$\textbf{5.80\%}     & 5.63\%        & 51.03\%                               & 11.86\%                           \\
    OE                      & \textbf{21.18\%}      & $\pmb{\downarrow}$\textbf{0.32\%}     & 3.59\%        &  0.0\%                                & 7.83\%                            \\
    \midrule
    Entropy Pre             & 68.22\%               & 76.33\%                               & 65.39\%       & 59.45\%                               & 60.50\%                           \\
    Entropy Rec             & 84.50\%               & 82.56\%                               & 87.03\%       & 47.38\%                               & 50.68\%                           \\
    Entropy FPR             & \textbf{39.36\%}      & $\pmb{\downarrow}$\textbf{25.61\%}    & 46.06\%       & 32.31\%                               & 33.09\%                           \\
    Entropy AUROC           & \textbf{78.22\%}      & $\pmb{\uparrow}$\textbf{85.41\%}      & 77.96\%       & $\pmb{\downarrow}$\textbf{60.84\%}    & $\pmb{\downarrow}$\textbf{61.40\%}\\
    \midrule
    Max. Score Pre          & 68.30\%               & 77.32\%                               & 68.44\%       & 63.96\%                               & 59.13\%                           \\
    Max. Score Rec          & 83.97\%               & 81.83\%                               & 83.87\%       & 65.55\%                               & 56.66\%                           \\
    Max. Score FPR          & \textbf{38.97\%}      & $\pmb{\downarrow}$\textbf{24.00\%}    & 38.68\%       & 36.93\%                               & 39.16\%                           \\
    Max. Score AUROC        & \textbf{78.12\%}      & $\pmb{\uparrow}$\textbf{85.63\%}      & 78.15\%       & $\pmb{\downarrow}$\textbf{69.80\%}    & $\pmb{\downarrow}$\textbf{61.84\%}\\
     \midrule
    Top-3 Scores Pre        & 67.48\%               & 76.36\%                               & 67.88\%       & 68.48\%                               & 59.41\%                           \\
    Top-3 Scores Rec        & 85.81\%               & 85.71\%                               & 84.60\%       & 85.08\%                               & 55.39\%                           \\
    Top-3 Scores FPR        & \textbf{41.35\%}      & $\pmb{\downarrow}$\textbf{26.53\%}    & 40.04\%       & 39.16\%                               & 37.85\%                           \\
    Top-3 Scores AUROC      & \textbf{78.29\%}      & $\pmb{\uparrow}$\textbf{86.24\%}      & 78.38\%       & 79.60\%                               & $\pmb{\downarrow}$\textbf{61.86\%}\\
    \bottomrule
    \end{tabular}
    }
    \caption{Training and attack metrics for ResNet-50 target models trained on Stanford Dogs. We compare the results for the standard model to models trained with label smoothing (LS) and Laplace approximation (LA) as calibration techniques and temperature scaling (Temp) and L2 regularization as defense techniques. Arrows indicate the differences compared to the standard model.}
    \label{tab:model_attack_metrics_dogs}
    \vskip -0.1in
\end{table}

{\bf (Q2) High Prediction Scores Lower Privacy Risks.} \label{sec:overconfidence_distracts_mi}
To shed light on the connection between overconfidence and high FPR of the MIAs, we analyzed the mean maximum prediction scores (MMPS) of the target models' predictions.

Tab.~\ref{tab:mmc_top_3} shows the MMPS values measured on a standard ResNet-50 and underlines our assumption that all score-based MIAs against models trained with standard procedure mainly rely on the maximum score since there is a clear difference between the MMPS of false-positive and true-negative predictions. The results we have obtained for the CIFAR-10 models are similar to the Results on the ResNet-50 and we present these in Appx.~\ref{app:cifar_mmc}. For additional results on the ResNet-50, see Appx.~\ref{app:dogs_mmc}. 

It seems that the non-maximum scores are not providing significant information on the membership status since the MMPS values of the false-positive predicted samples using the maximum score attack and the top-3 attack differ only slightly. Modifying the top-3 attack to use a larger part of the score vector for inferring membership of the samples did not significantly improve the membership inference either. 

So on one side, neural networks are overconfident in their predictions, even on inputs without any known content. It prevents a reasonable interpretation regarding a model's probability of being correct in its predictions. During MIAs, on the other side, this behavior implicitly protects the training data since the information content of the prediction score is rather low. Consequently, there is a trade-off between a model's ability to react to unknown inputs and its privacy leakage. We explore this trade-off in Q3. We further argue that any adversarial example maximizing the target model's scores in an arbitrary class would also be classified as a member in almost all cases. So it is possible to hide members in a larger dataset of non-members that are altered by adversarial attacks to maximize the target model's scores.

\begin{table}[t]
    \centering
    \resizebox{\columnwidth}{!}{
    \begin{tabular}{llcc}
    \toprule
    \textbf{Dataset}                & \textbf{Attack}   & \textbf{FP MMPS}  & \textbf{TN MMPS}      \\
    \midrule
    \multirow{3}{*}{Stanford Dogs}  & Entropy           & 0.9984                & 0.7565            \\
                                    & Max. Score        & 0.9985                & 0.7580            \\
                                    & Top-3 Scores      & 0.9979                & 0.7486            \\
    \midrule
    \multirow{3}{*}{Fake Dogs}      & Entropy           & 0.9977                & 0.7700            \\
                                    & Max. Score        & 0.9979                & 0.7724            \\
                                    & Top-3 Scores      & 0.9971                & 0.7648            \\
    \midrule 
    \multirow{3}{*}{AFHQ Cats}      & Entropy           & 0.9972                & 0.7205            \\
                                    & Max. Score        & 0.9972                & 0.7208            \\
                                    & Top-3 Scores      & 0.9959                & 0.7137            \\
    \bottomrule
    \end{tabular}
    }
    
  \caption{MMPS for false-positive (FP) and true-negative (TN) predictions of different attacks on the standard ResNet-50 model on selected datasets. A clear difference between false-positive and true-negative mean maximum prediction scores for all attacks can be seen. This indicates that all of the analyzed attacks heavily relied on the maximum prediction score.}
  \label{tab:mmc_top_3}
\end{table}

\begin{figure}[ht]
    \centering
    \begin{subfigure}[b]{.49\linewidth}
        \centering
        \includegraphics[width=\linewidth]{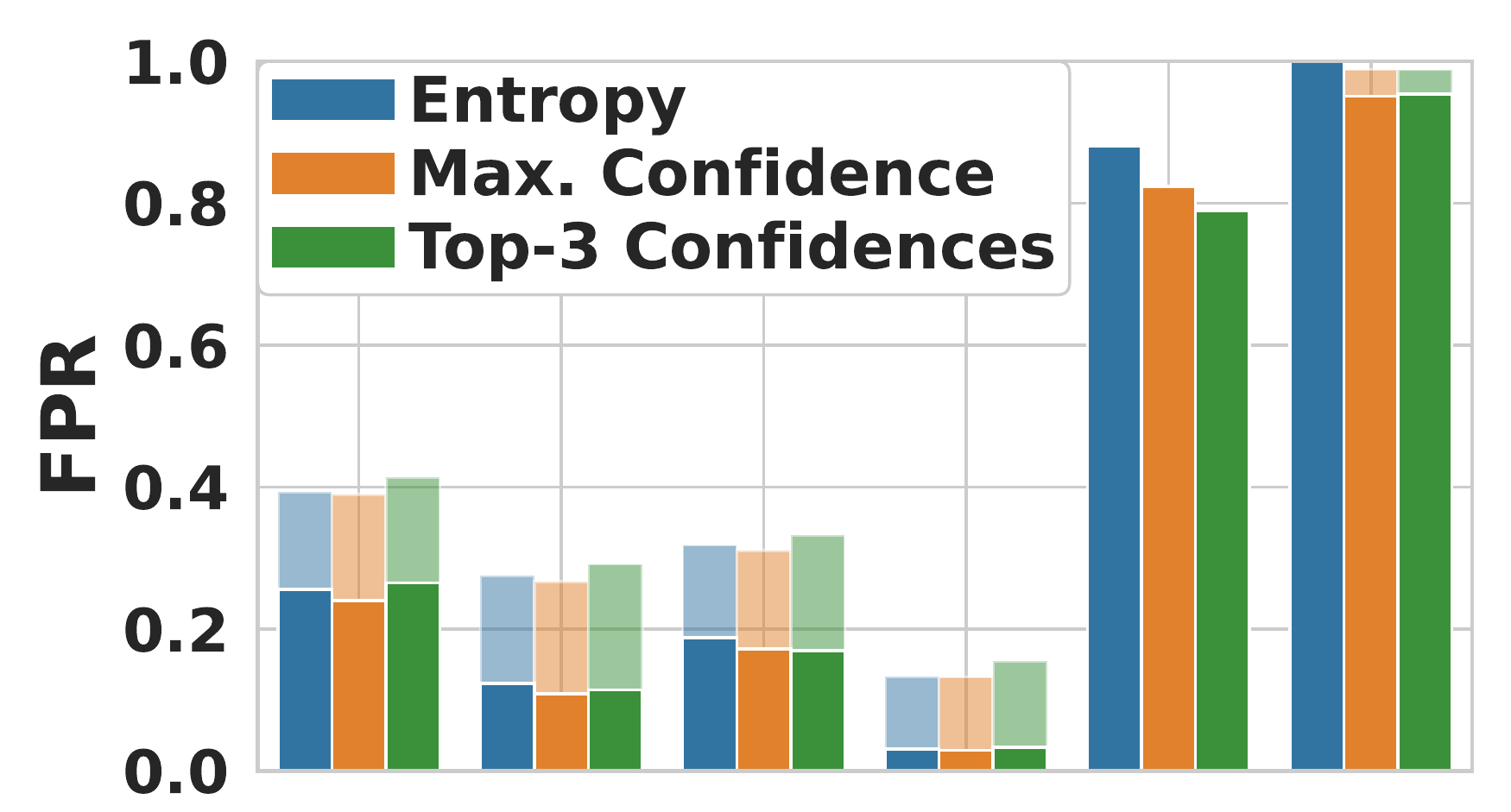}
        \captionsetup{justification=centering}
        \caption{ResNet-50 (LS)}
        \label{fig:fpr_resnet50_dogs_ls}
    \end{subfigure}
    \begin{subfigure}[b]{.475\linewidth}
        \hfill
        \includegraphics[width=.92\linewidth, height=21.4mm]{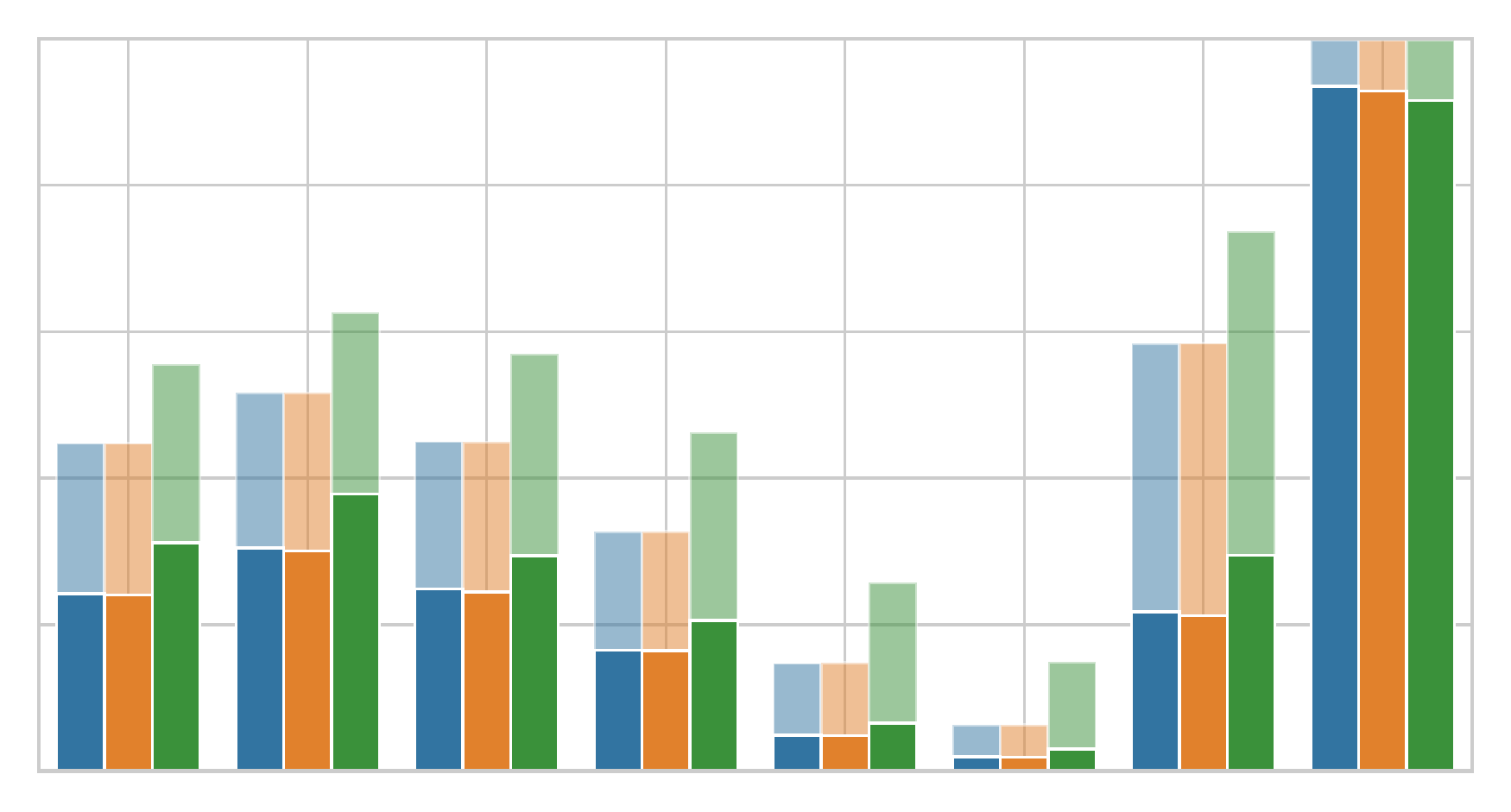}
        \captionsetup{justification=centering}
        \caption{ResNet-18 (LS)}
        \label{fig:fpr_resnet18_cifar10_ls}
    \end{subfigure}
    \begin{subfigure}[b]{.49\linewidth}
        \centering
        \includegraphics[width=\linewidth]{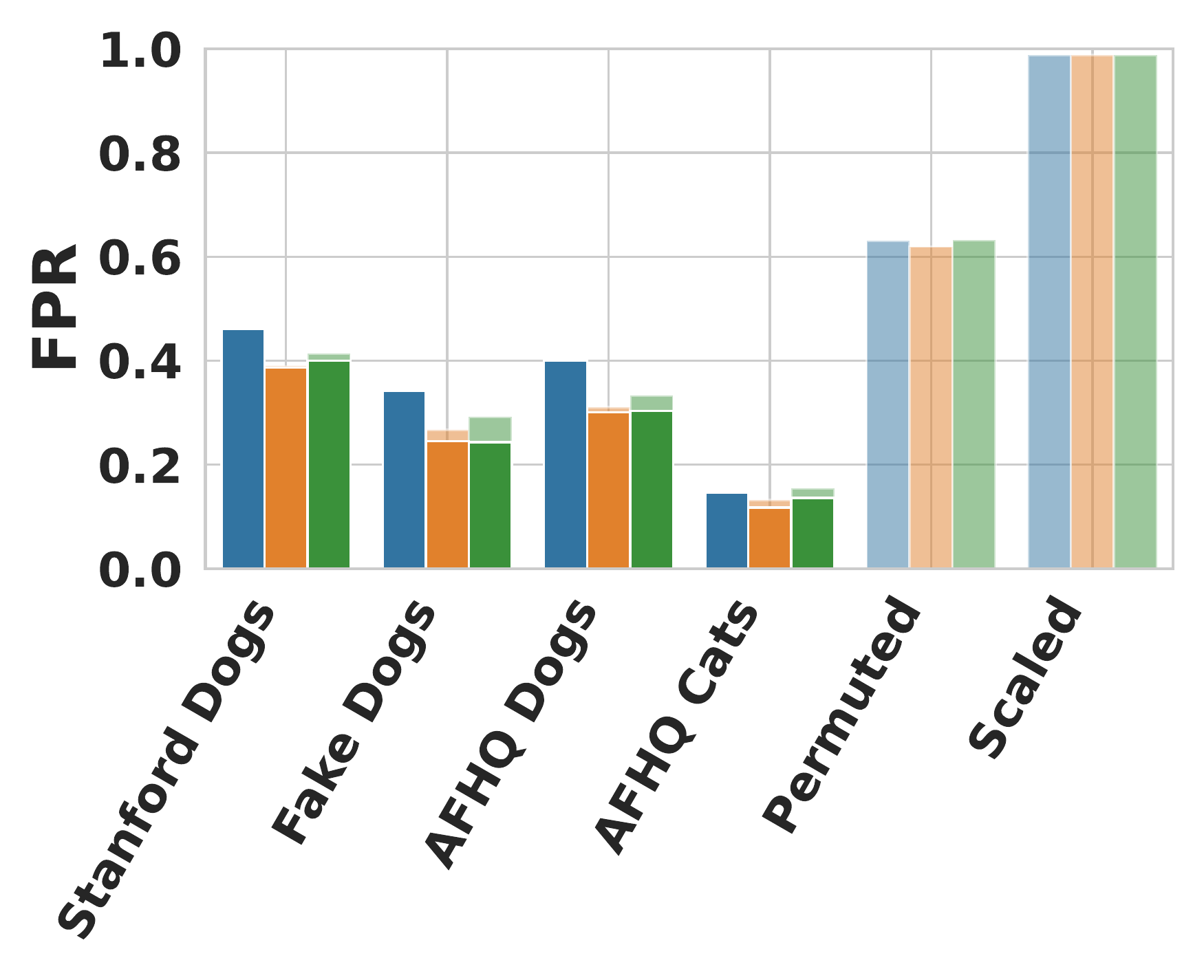}
        \captionsetup{justification=centering}
        \caption{ResNet-50 (LA)}
        \label{fig:fpr_resnet50_dogs_la}
    \end{subfigure}
    \begin{subfigure}[b]{.48\linewidth}
        \hfill
        \includegraphics[width=\linewidth]{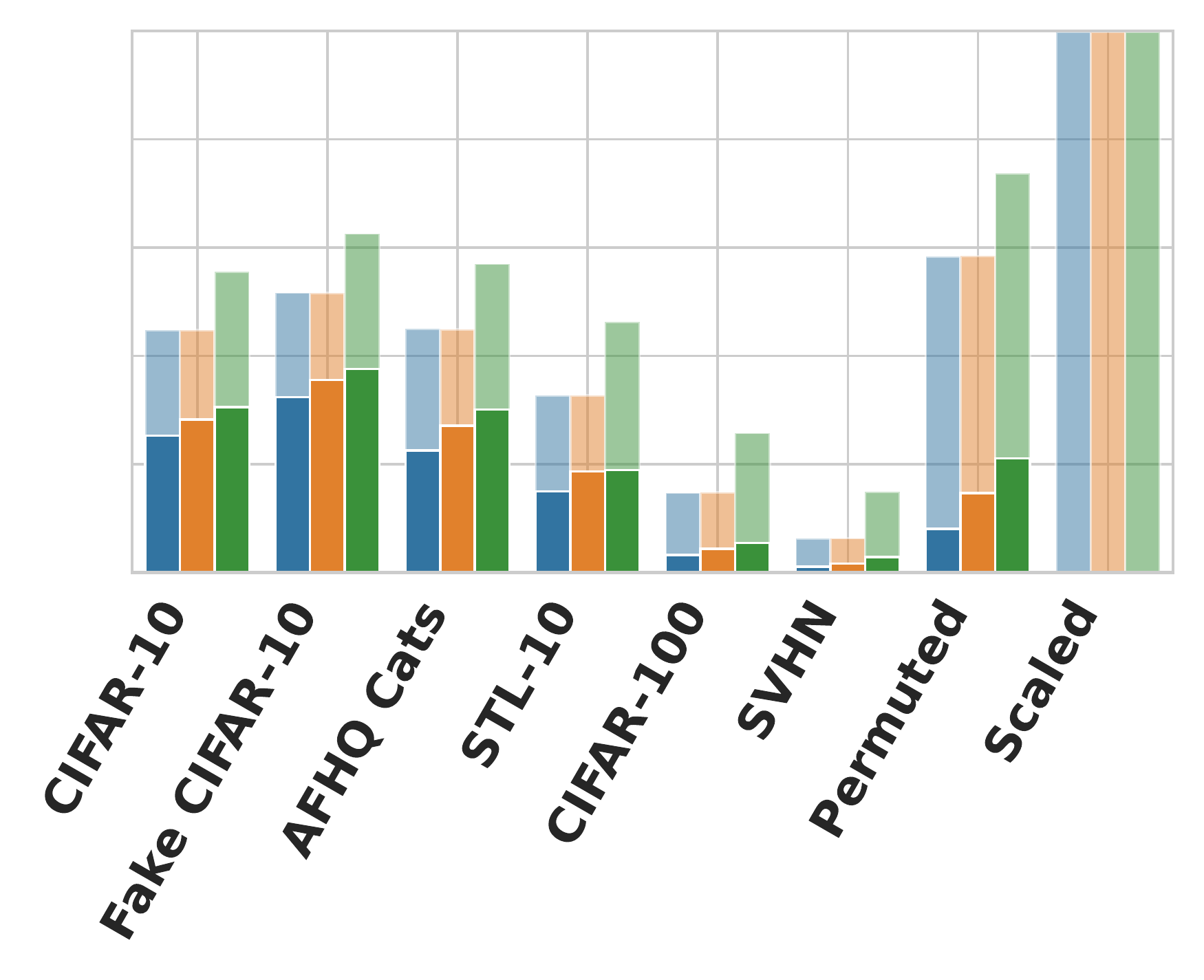}
        \captionsetup{justification=centering}
        \caption{ResNet-18 (LA)}
        \label{fig:fpr_resnet18_cifar10_llla}
    \end{subfigure}
    \caption{False-positive rates (FPR) of MIAs against ResNet-18 and ResNet-50. The transparent bars represent the FPR of the standard models, whereas the solid bars represent the FPR of the models with the respective modification given in parentheses - label smoothing (LS) and Laplace approximation (LA). Both calibration methods reduce the FPR for almost all inputs.}
    \vskip -0.1in
\end{figure}
\begin{figure*}[t]
     \centering
     \begin{subfigure}[b]{0.247\textwidth}
         \centering
         \includegraphics[width=\textwidth]{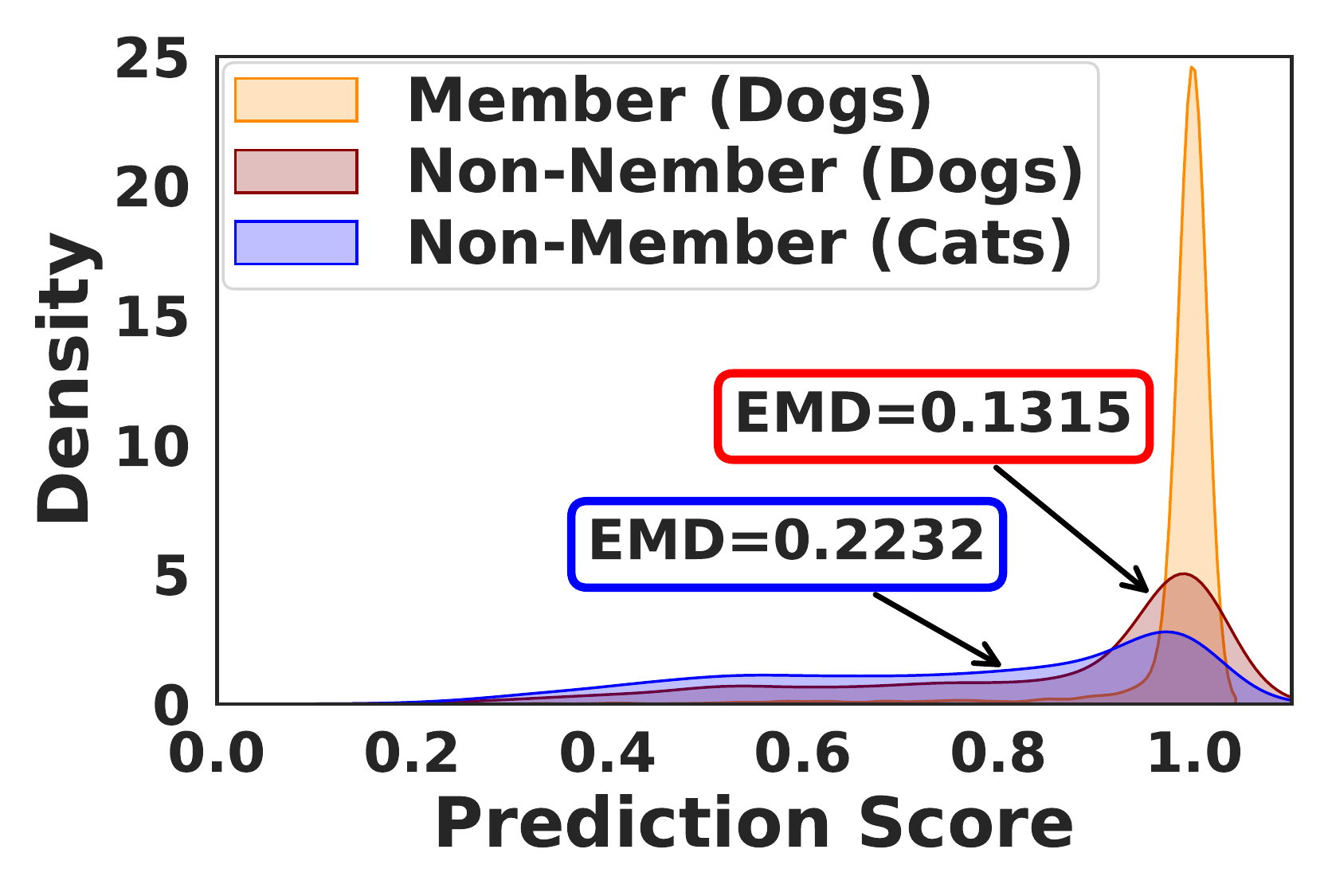}
         \captionsetup{justification=centering}
         \caption{ResNet-50}
         \label{fig:kde_standard}
     \end{subfigure}
     \begin{subfigure}[b]{0.24\textwidth}
         \centering
         \includegraphics[width=\textwidth]{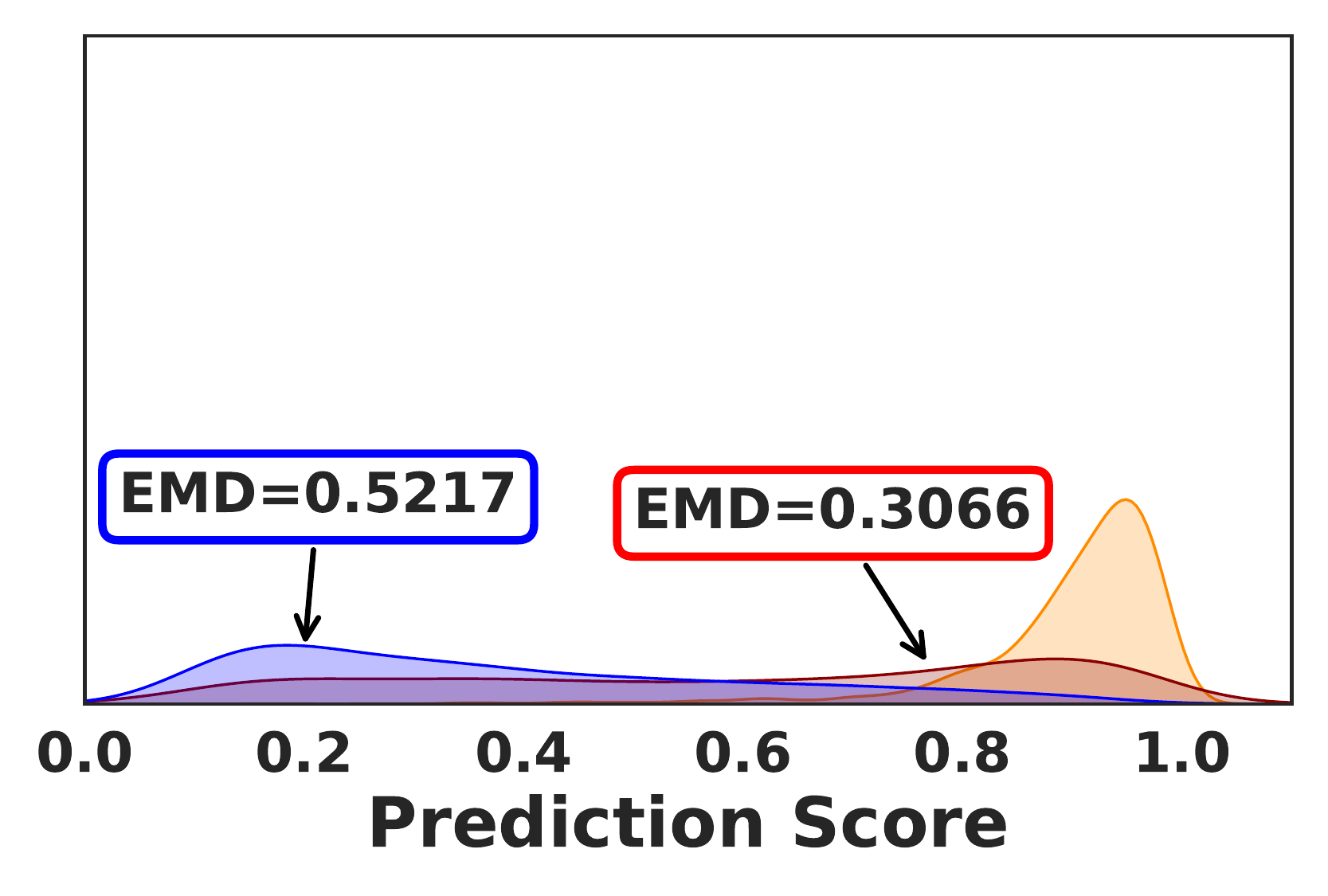}
         \captionsetup{justification=centering}
         \caption{ResNet-50 (LS)}
         \label{fig:kde_label_smoothing}
     \end{subfigure}
    \begin{subfigure}[b]{0.24\textwidth}
         \centering
         \includegraphics[width=\textwidth]{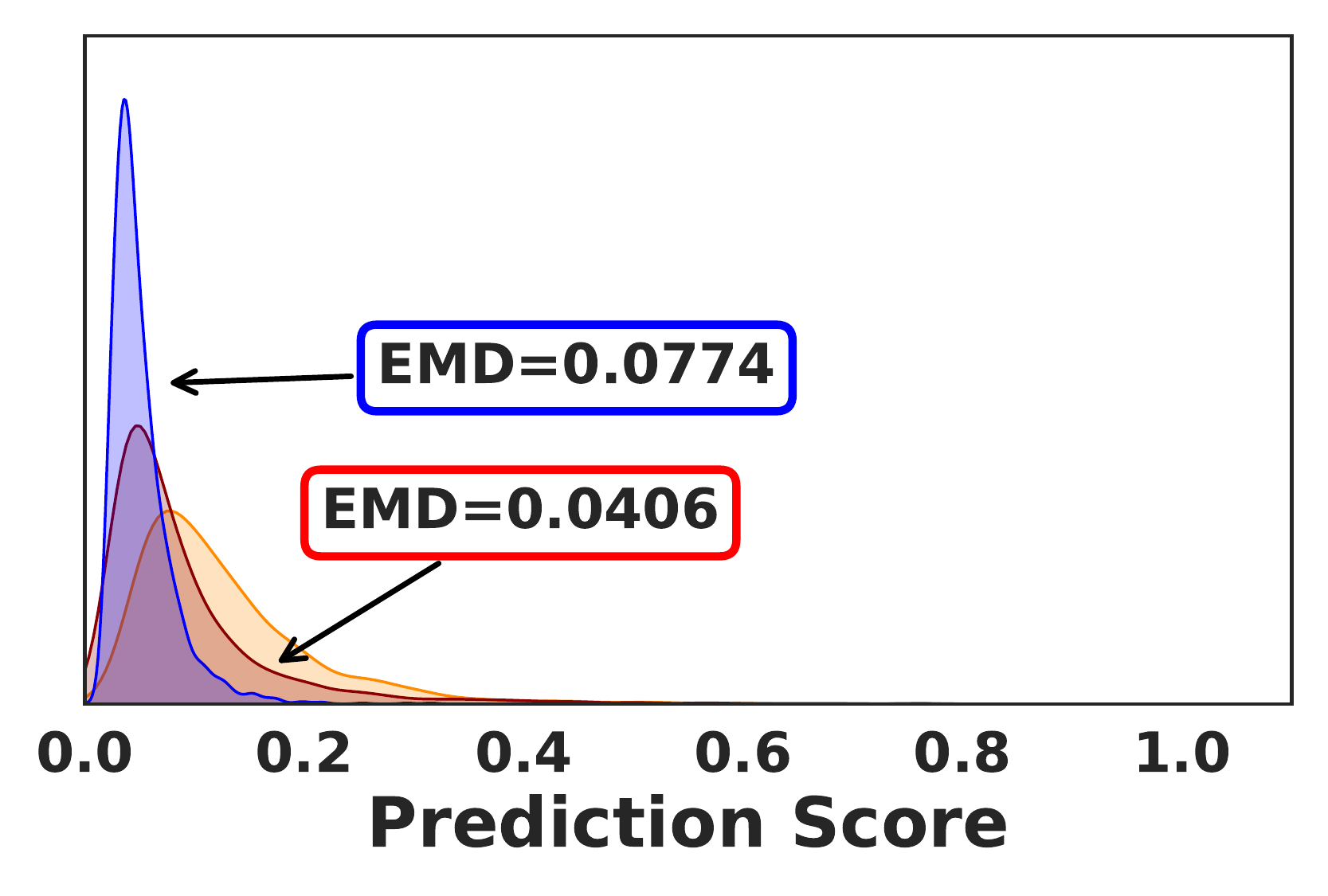}
         \captionsetup{justification=centering}
         \caption{ResNet-50 (Temperature)}
         \label{fig:kde_temperature}
     \end{subfigure}
    \begin{subfigure}[b]{0.24\textwidth}
         \centering
         \includegraphics[width=\textwidth]{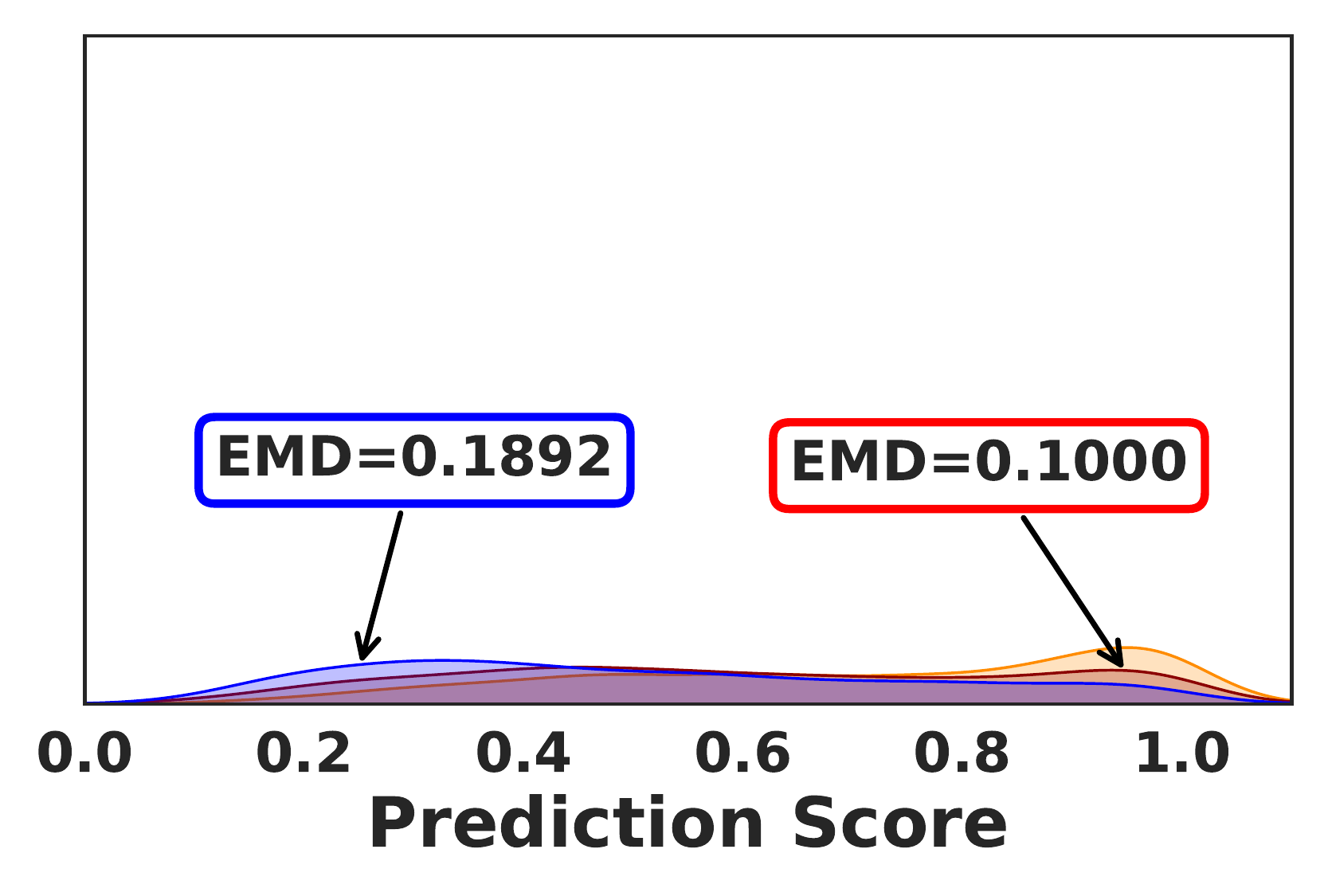}
         \captionsetup{justification=centering}
         \caption{ResNet-50 (L2)}
         \label{fig:kde_l2_regularization}
     \end{subfigure}

    \caption{Kernel density estimation applying Gaussian kernels on the top prediction scores values of ResNet-50 target models. We use equally-sized member and non-member subsets of Stanford Dogs and AFHQ Cats. We further state the earth mover's distance (EMD) between each dataset and the member dataset. Label smoothing (LS) moves the non-member distributions further away, and consequently, the members become easier to separate. Temperature scaling and L2 regularization show an inverse effect and increase the overlapping.}
    \label{fig:kde}
\end{figure*}
\begin{figure}[ht]
     \centering
     \begin{subfigure}[b]{.32\linewidth}
         \centering
         \includegraphics[width=\textwidth]{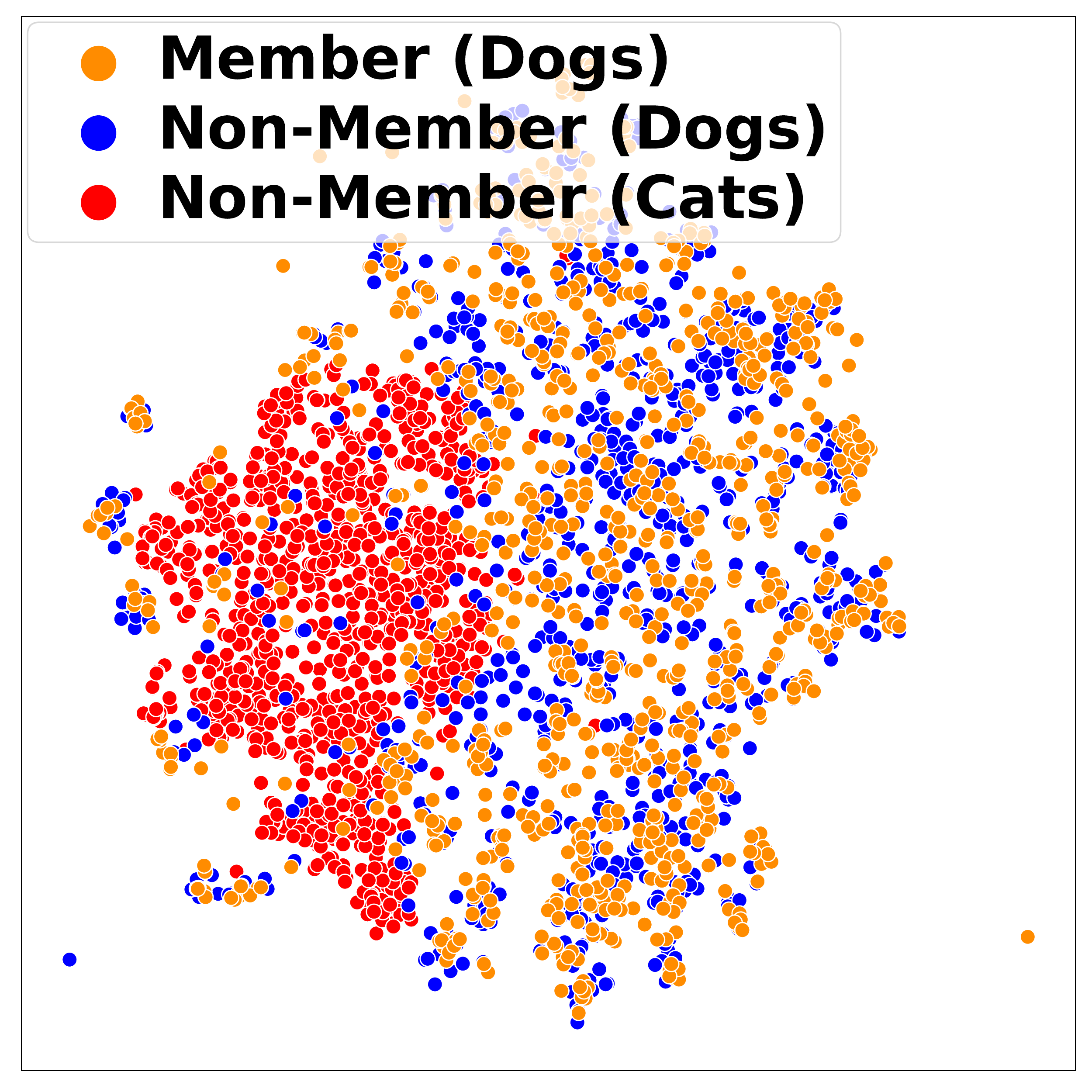}
         \caption{ResNet-50}
         \label{fig:tsne_standard}
     \end{subfigure}
     \begin{subfigure}[b]{.32\linewidth}
         \centering
         \includegraphics[width=\textwidth]{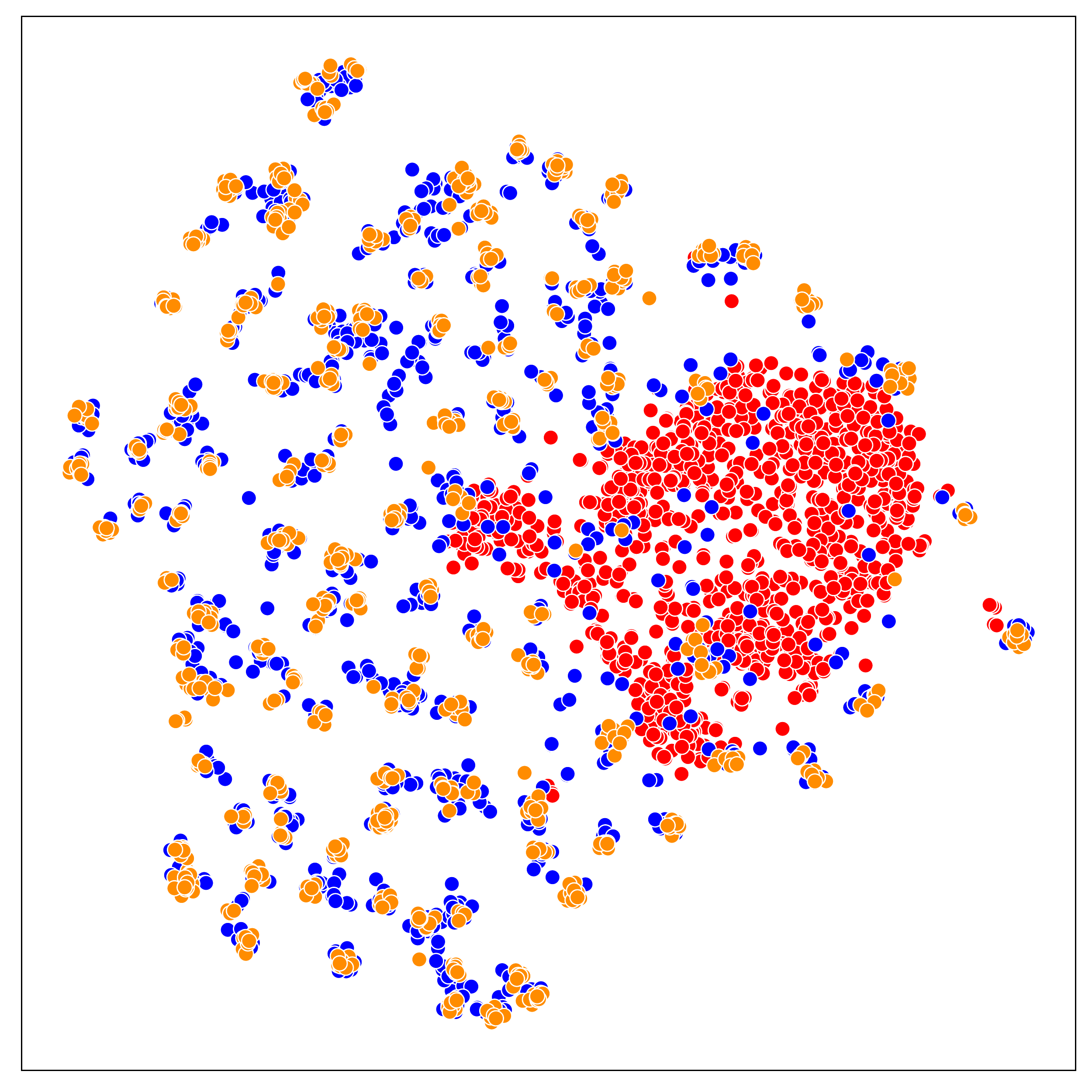}
         \captionsetup{justification=centering}
         \caption{ResNet-50 (LS)}
         \label{fig:tsne_label_smoothing}
     \end{subfigure}
     \begin{subfigure}[b]{.32\linewidth}
         \centering
         \includegraphics[width=\textwidth]{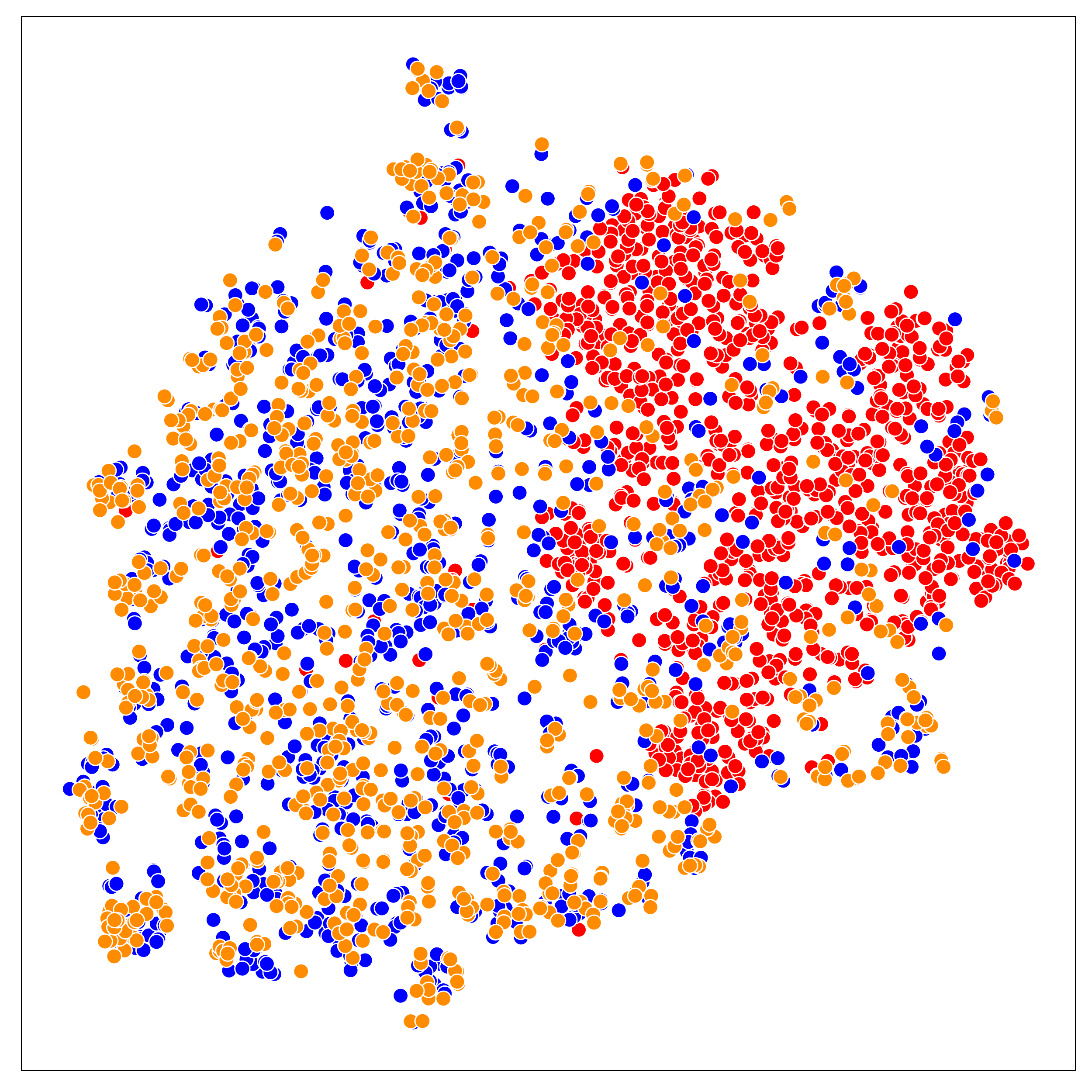}
        \captionsetup{justification=centering}
         \caption{ResNet-50 (L2)}
         \label{fig:tsne_regularization}
     \end{subfigure}
    \caption{T-SNE visualization of the penultimate ResNet-50 layer activations on training samples (orange), test samples (blue), and OOD samples (red). Label smoothing (LS) creates much tighter clusters of training and OOD cat samples, which makes them easier to separate, whereas L2 regularization has an reverse effect.}
    \label{fig:tsne}
\end{figure}

\newpage
{\bf (Q3) Mitigating Overconfidence Increases Privacy Risks.} \label{sec:calibration_effect}
Ideally, neural networks are properly calibrated, and their prediction scores represent the probabilities of correct predictions. To calibrate the models and to reduce the overconfidence, we retrained the ResNet-18 and ResNet-50 models with label smoothing. We performed the same calibration method on both the target and the shadow models, which reflects a worst-case scenario, with an adversary knowing the exact calibration method and hyperparameters.  

Label smoothing not only calibrates a model but may also improve its test accuracy, as shown in Tab.~\ref{tab:model_attack_metrics_dogs} for ResNet-50. Detailed results for ResNet-18 are given in Appx.~\ref{app:resnet_18_add_attack_result}. 

Both the expected calibration error (ECE) and overconfidence error (OE) dropped significantly, demonstrating a strong calibration effect when using label smoothing. 

Previous works on MIAs suggested that minimizing the accuracy gap between the training and test accuracy on the same architecture leads to weaker attacks and, therefore, to lower privacy risks. 
However, as demonstrated by the results summarized in Tab. \ref{tab:model_attack_metrics_dogs}, label smoothing improves the test accuracy and still yields higher attack precision values for all three attacks on both architectures. Figs.~\ref{fig:fpr_resnet50_dogs_ls} and \ref{fig:fpr_resnet18_cifar10_ls} further illustrate that label smoothing reduces the number of false-positive membership predictions. 
Whereas the FPR on the Permuted samples is drastically reduced for ResNet-18, the FPR of the ResNet-50 on the Permuted samples even increases when using label smoothing. We note that this effect does only occur in some training runs. In other cases, the FPR for Permuted data drops similar to the ResNet-18 results. On all datasets, the reductions in the FPR are comparable between the ResNet-18 and ResNet-50. The FPR also decreases for inputs similar to the training data. For comparison, we also apply a Laplace approximation (LA) on the weights of the final layers to mitigate overconfidence. As shown in Figs.~\ref{fig:fpr_resnet50_dogs_la} and \ref{fig:fpr_resnet18_cifar10_llla}, LA is better suited to avoid high prediction scores on the Permuted and Scaled samples. 

Our results demonstrate that if a model shows reduced prediction scores on unseen inputs, the samples of the training data are easier to identify. It reduces the protection induced by overconfident predictions (on unseen inputs) and increases vulnerability to MIAs. We applied a kernel density estimation (KDE) to visualize the distribution of the maximum prediction scores of the ResNet-50 target models on member and non-member data. Figs.~\ref{fig:kde_standard} and \ref{fig:kde_label_smoothing} show the estimated density functions. Without label smoothing, all three distributions have their mode around prediction scores of $1.0$. This leads to a large overlap of the distributions. Samples with prediction scores this high are most likely classified as false-positive members as the FP MMPS values in Tab.~\ref{tab:mmc_top_3} suggest. We also state the earth mover's distance (EMD) in the KDE plots to quantify the distance between the member and non-member distributions. Label smoothing separates the three distributions clearly and doubles both EMD values. The label smoothing model tends to be less overconfident in its predictions on unknown input data, and hence the member samples are easier to separate from non-members. This increases the potential privacy leakage of MIAs. 

As depicted in Fig.~\ref{fig:tsne}, we further used t-SNE \citep{vandermaaten08a} to plot the penultimate layer activations on samples from the same datasets as used for the KDE plots. Whereas the standard model in Fig.~\ref{fig:tsne_standard} shows an overlapping between the activations of the three datasets, label smoothing in Fig.~\ref{fig:tsne_label_smoothing} creates tighter clusters of dog samples and separates the OOD cat images more clearly.

{\bf (Q4) A Trade-off Between Calibration and Defenses Exists.} \label{sec:defenses}
Whereas calibration tries to maximize the informative value of the prediction scores, many defenses against MIAs aim to reduce the informative value and to align the score distributions of members and non-members. In this section, we want to investigate whether it is possible to defend calibrated models or a trade-off between calibration and defenses against MIAs exists. Defenses reduce the generalization of a model in terms of its ability to distinguish between samples from known and unknown inputs and express meaningful scores. To test this, we first applied temperature scaling with $T=10$ to the trained ResNet-50 standard model without calibration. Fig.~\ref{fig:kde_temperature} shows the estimated maximum prediction score distributions. The score vectors converge to a uniform distribution, and the distributions of the top scores are much more similar. This can be seen by the significantly lower EMD values. With an ECE of 51\% using temperature scaling, the information content of the actual prediction score is greatly reduced, and the AUROC for the Entropy and Maximum Score attacks drop significantly, as shown in Tab.~\ref{tab:model_attack_metrics_dogs}. On the top-3 score attack, temperature scaling has no effect. We suspect this is due to the added temperature term being a monotone transformation, not removing information encoded in the top-3 score patterns.

We also investigated L2 regularization as a stronger defense applied during training on our ResNet models. L2 regularization effectively reduces the vulnerability to MIAs. For all attacks, both precision and recall drop significantly at the cost of reduced test accuracy, as Tab.~\ref{tab:model_attack_metrics_dogs} states. Moreover, the ECE and OE are significantly higher than for the model trained with label smoothing. The distribution of the highest prediction scores can be seen in Fig.~\ref{fig:kde_l2_regularization}. Similar to temperature scaling, L2 regularization aligns the distributions of members and non-members but distributes the maximum scores more equally instead of pushing it towards a single value. Fig.~\ref{fig:tsne_regularization} shows a similar effect of overlapping distributions in the penultimate layer activations, making it harder to separate members from non-members and OOD data.

As shown in our experiments, defenses are contrary to calibration. Our results indicate that a trade-off exists between defending models against MIAs and applying calibration to increase the model's informative value.

\section{Discussion}\label{sec:discussion}
In all our analyses, we followed the standard threat model for MIAs in the literature and assumed a strong adversary with full knowledge about the target model's architecture and training procedure and having access to data from the target's training distribution. 
Our experiments underline the known fact that modern neural networks are not inherently able to identify unseen and unknown inputs and cannot adapt their behavior in terms of reducing the prediction scores. However, we have shown that this is why the expressiveness of MIAs in realistic scenarios is greatly reduced, and the associated privacy risks are thus much lower than previously assumed. Loosening the attack scenario assumptions and providing the attacker with even less information during an attack, the effectiveness of MIAs will decrease even further.

One way to mitigate the problem of false-positive predictions on unseen data is to first try to identify and remove all OOD samples. This would indeed prevent some false-positive predictions caused by completely different data distributions. However, we demonstrated that the problem of high FPR also occurs on datasets similar to the training data. In this case, the adversary has no means to tell whether a given sample is in- or out-of-distribution if the images' contents are similar, which in turn makes it impossible for the attacker to filter out OOD samples. Even if this were possible, by generating synthetic images, we have shown that there is a potentially unlimited number of samples that follow the training distribution and still lead to false-positive MIA predictions, questioning the overall informative value of MIAs. 

We only considered prediction score-based MIAs, but we expect our results to be similar for other kinds of attacks. Doing so provides an interesting avenue for future work. Also, future research should further investigate the trade-off between MIA defenses and calibration of machine learning models and how both aspects could be balanced. Furthermore, including techniques from open set recognition and OOD detection into MIAs might improve their effectiveness.

\section{Conclusion}\label{sec:conclusion}
We have shown that MIAs produce high false-positive rates due to overconfident predictions of modern neural networks for in- and out-of-distribution data. In stark contrast to previous works stating strong attack results on standard neural networks, we demonstrate that MIAs are actually not reliable in realistic scenarios, and overconfidence can be seen as a natural defense against these attacks. 
Our results suggest that there is a trade-off between reducing a model's overconfidence and its susceptibility to MIAs. Therefore, the informative value of MIAs increases on calibrated models, increasing the privacy risk. 
As a result, our analysis has shown that MIAs are not as powerful as previously thought and are at odds with the meaning of neural networks' prediction scores.

\newpage

\section* {Acknowledgements} 
This work was supported by the German Ministry of Education and Research (BMBF) within the framework program ``Research for Civil Security'' of the German Federal Government, project KISTRA (reference no. 13N15343).

\bibliographystyle{named}
\bibliography{main.bib}

\clearpage
\appendix


\section{Extended Proof of Theorem 1}\label{app:proof}
We provide an extended proof of our Theorem 1, particularly for the following part:  \\

\textit{
For almost any input $x \in \mathbb{R}^m$ and a sufficiently small $\epsilon>0$ if $\max_{i=1,...,d} \,\, f(x)_i \geq 1-\epsilon$, it follows that $h(f(x)) = 1$.
}\\

Most score-based membership inference attacks (MIAs) ${h: \mathbb{R}^d \rightarrow \{0, 1\}}$ are solely based on thresholds on prediction score statistics. In our work, we investigated attacks using thresholds on the maximum value and the entropy of the prediction score vectors. For the maximum prediction score attack, the statement is trivial since for a given attack threshold $\tau$ on the maximum score, the attack's decision is
\begin{equation}
h(f(x))=
\begin{cases}
1,& \text{if} \max_{i=1,...,d} \,\, f(x)_i \geq \tau \\
0, & \text{otherwise}\\
\end{cases}
\end{equation}
Given any input $x\in \mathbb{R}^m$ and $\max_{i=1,...,d} \,\, f(x)_i \geq 1-\epsilon$ with $\epsilon>0$, it follows that $h(f(x))=1 \iff \epsilon \leq 1 - \tau$. The same can be shown analogously for other threshold-based attacks, such as the entropy-based attack. However, for attacks using a separate neural network as an attack model to predict the membership status based on the confidence vector, a closed-form analysis is non-trivial, and there might exist model inputs and corresponding prediction score vectors for which the attack model outputs a non-member prediction even if $\max_{i=1,...,d} \,\, f(x)_i \geq 1-\epsilon$ is fulfilled. For example, there might exist an adversarial perturbation of the confidence vector that misleads the attack model's prediction. Still, our results in Table 3 empirically support that the theorem also holds for attacks using a separate attack model.

The second part of our proof uses that ${lim_{\delta \rightarrow \infty} \max_{i=1,...,d} f(\delta x)_i = 1}$. We refer interested readers to the proofs of Lemma 3.1 and Theorem 3.1 in \citet{Hein2019WhyRN} and only outline the proof's intuition. As stated in our paper, a neural network using ReLU activation functions is decomposing the input space into a finite set of polytopes (linear regions). As a consequence of a finite number of polytopes representing an infinite input space, the outer polytopes extend to infinity. Scaling the input samples will move them into the outer polytopes, allowing arbitrarily high prediction scores.

\section{Experimental Setup Details}\label{app:experimental_details}
We performed all our experiments on NVIDIA DGX machines running NVIDIA DGX Server Version 4.4.0 and Ubuntu 18.04 LTS. The machines have 1.6TB of RAM and contain Tesla V100-SXM3-32GB-H GPUs and Intel Xeon Platinum 8174 CPUs. We further relied on Python 3.8.8 and PyTorch 1.8.1 with Torchvision 0.9.1  \citep{paszke2019pytorch} for the implementation and training of the neural networks. We provide a dockerfile together with our code to facilitate execution and reproducibility. We performed a single experimental run and set the seed for all experiments to 42 to allow reproducibility.

\subsection{Architectures}\label{app:nn_architectures}
We use ResNet-50, ResNet-18, EfficientNetB0 and a custom CNN (SalemCNN) for our experiments.

\textbf{ResNet-50}: We use the ResNet-50 implementation and the ImageNet weights provided by PyTorch. 

\textbf{ResNet-18 and EfficientNetB0}: We rely on the implementations provided at \url{https://github.com/kuangliu/pytorch-cifar} under MIT License. Note that this ResNet-18 and EfficientNetB0 implementations slightly differ from the official architectures since we train on CIFAR-10 instead of ImageNet. Differences occur mainly in early layers and show up in smaller kernel sizes and strides to avoid a large reduction in feature map sizes.

\textbf{SalemCNN}: Following \citet{salem2018mlleaks}, the model consists of two convolutional layers, each containing 32 filters of size 5 and a padding of 2 to maintain the spatial ratio. Each layer is followed by a ReLU activation and a $2\times 2$ max pooling layer with stride 2 for downsampling. After that, two fully-connected layers further process the extracted features. The first fully-connected layer contains 128 neurons, while the number of neurons of the second one corresponds to the number of classes on the training set. Note that in its original version, the model uses a tanh activation on the first fully-connected layer. We change it to a ReLU to keep the network piecewise linear. We did not notice any significant performance differences between both variants.

\subsection{Attacks}\label{app:salem_details}
{\bf Top-3 Score Attack} We use a simple neural network as an inference model for membership inference. The model consists of a neural network with one hidden layer containing 64 neurons and ReLU activations. Unlike \citet{salem2018mlleaks}, our inference model only uses a single output neuron, followed by a sigmoid function. During training, we first query the shadow model with samples with known membership status and collect the predicted scores. The values of each score vector are then sorted in descending order and the three highest values are used together with the membership status to train the inference model. The inference model is then trained on the membership dataset gathered by querying the shadow model and collecting the prediction score vectors. We use Adam optimizer \citep{kingma2017adam} with learning rate 0.01, optimizing a binary cross-entropy loss. The training uses a batch size of 16 and is stopped if the loss is not decreasing by at least $5e^{-4}$ for 15 epochs.

{\bf Maximum Prediction Score Attack} To find the threshold for the maximum prediction score attack a receiver operating characteristic (ROC) curve is created with the maximum values of each prediction score vector. We then choose the best threshold that maximizes the true-positive rate while minimizing the false-positive rate.

{\bf Entropy Attack} For each of the prediction score vectors the entropy is calculated. To find the threshold a linear search is performed.

\subsection{Training Hyperparemters}\label{app:hyperparameters}
To set the hyperparameters, we perform a small grid search for each model and dataset. Hence, we do not aim to achieve maximum test accuracy but to keep the training procedure simple. We roughly optimized the number of epochs \{50, 100, 200\}, learning rate \{0.1, 0.05, 0.01, 0.005, 0.001, 0.0005, 0.0001\} and optimizer type \{Adam, SGD\}. Since we usually expect the MIAs to perform worse on models with higher test accuracy, we argue that further hyperparameter optimization would degrade the attack metrics and strengthen our statements. We finally choose Adam optimizer \citep{kingma2017adam} with default parameters ($\beta_1=0.9, \beta_2=0.999, \epsilon=1e-08$) to train all our models. We use a batch size of 64 and a seed of 42 for training and experiments. All models are trained for 100 epochs.

For training with label smoothing, we set the smoothing factors to $\alpha = 0.1$ (ResNet-50) and $\alpha = \frac{0.1}{12}$ (ResNet-18). For L2 regularization, we retrained the ResNets with a weight decay of $\lambda = 0.001$ for ResNet-50 and a weight decay of $0.0003$ for ResNet-18.

\begin{figure*}[ht]
\centering
\includegraphics[width=0.85\linewidth]{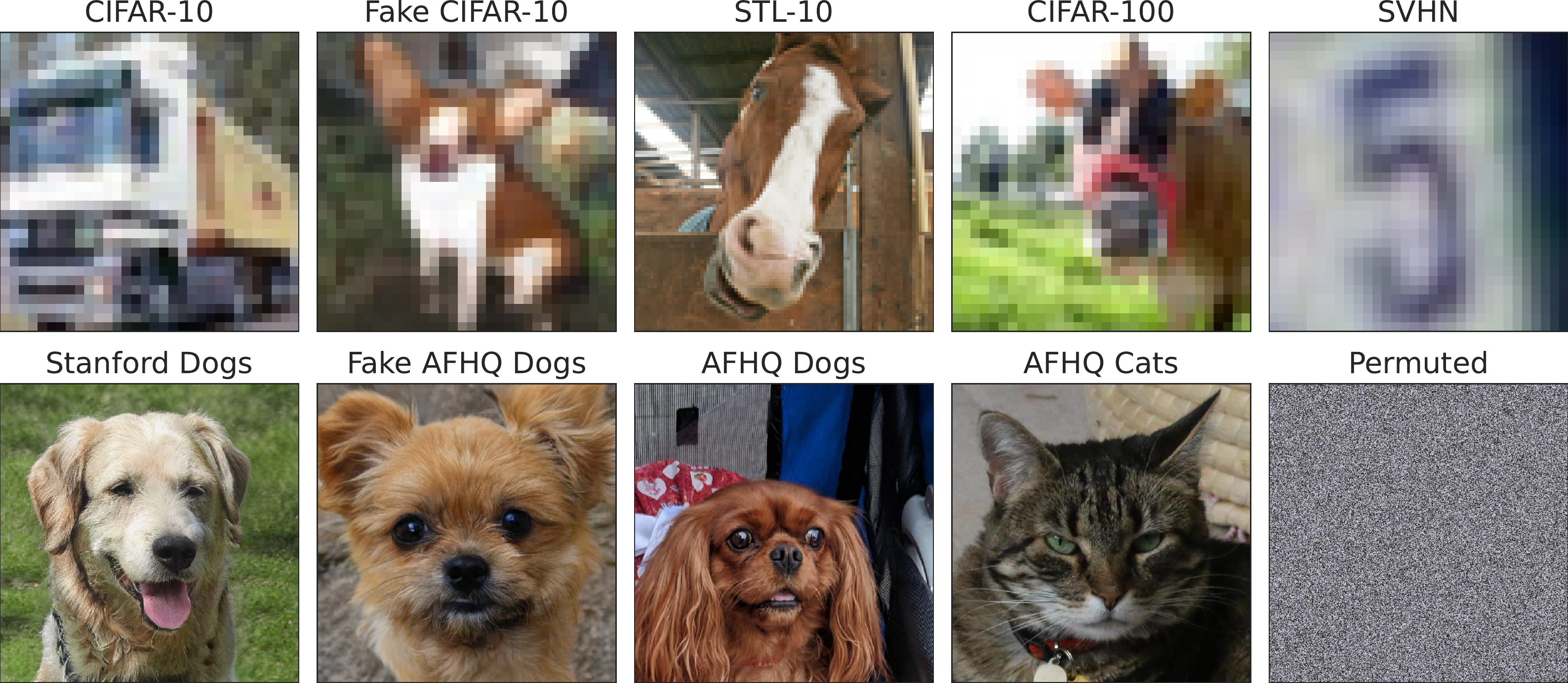}
\caption{Randomly selected samples from the datasets without any preprocessing. At inference time,  we scale all images to 32x32 for models trained on CIFAR-10 and 224x224 for models trained on Stanford Dogs. We further use StyleGAN2 to generate fake samples of CIFAR-10 and AFHQ Dogs, demonstrating a potentially infinite number of samples following a similar distribution.}
\label{fig:dataset_samples}
\end{figure*}

\textbf{CIFAR-10 models}: For the SalemCNN, we set the learning rate to 0.001, while using a higher learning rate of 0.01 for ResNet-18 and EfficientNetB0. 

\textbf{Stanford Dogs models}: We use pre-trained ImageNet weights for ResNet-50 and set the learning rate to 0.001. We replace the final fully-connected layer to match the number of classes. 

\subsection{KDE Plots}
We compute the earth mover's distance using the Wasserstein metric as implemented in SciPy 1.6.3 \citep{2020SciPy-NMeth} with default parameters. Kernel density estimations are created with Seaborn 0.11.1 \citep{Waskom2021} and default parameters on 2,058 samples for each dataset.

\subsection{t-SNE Plots}
We create all three t-SNE plots using scikit-learn 0.24.2 \citep{scikit-learn} with the same hyperparameters. Each embedding is initialized with a PCA. We set the perplexity to 30, the learning rate to 100, and the maximum number of iterations to 1,000. The random state is set to 13.

\subsection{Datasets}\label{app:datasets}

We normalize all input images on the statistics of the target model's training data (CIFAR-10 models) by computing the standard score $z=\frac{x-\mu}{\sigma}$ of each input or by using the imagenet statistics (Stanford Dogs models). We state exact parameters in table \ref{tab:statistics}. For inference on CIFAR-trained models, we downsize samples from other distributions to 32x32 pixels. For models trained on Stanford Dogs, we resize all inputs to 224x224 pixels.

\begin{table}[H]
\centering
\resizebox{\columnwidth}{!}{
\begin{tabular}{ l c c }
 \textbf{Dataset} & \textbf{Mean} & \textbf{Std} \\ 
  \hline
 CIFAR-10 & (0.4914, 0.4822, 0.4465) & (0.2470, 0.2435, 0.2616) \\  
 Stanford Dogs & (0.485, 0.456, 0.406) & (0.229, 0.224, 0.225) \\
\end{tabular}
}
\caption{Statistics for dataset normalization.}
\label{tab:statistics}
\end{table}

\textbf{CIFAR-10/ CIFAR-100} \citep{Krizhevsky09learningmultiple}: The CIFAR-10 and CIFAR-100 datasets  each consist of 60,000 color images of size 32x32. Both training and test splits contain 50,000 and 10,000 samples, respectively. CIFAR-10 samples are grouped into 10 classes, CIFAR-100 correspondingly into 100 classes. The number of samples per class is completely balanced. CIFAR-10 contains samples from the classes airplane, automobile, bird, cat, deer, dog, frog, horse, ship and truck. 

More information is available at \url{https://www.cs.toronto.edu/~kriz/cifar.html}.

\textbf{Stanford Dogs Dataset} \citep{KhoslaYaoJayadevaprakashFeiFei_FGVC2011}: The Stanford Dogs dataset contains 20,580 images of 120 different dog breeds. Number of samples per class is not balanced. The dataset is built on ImageNet samples and has a significantly higher image resolution than the other datasets we use, except AFHQ. We do not rely on the official dataset split to increase the number of training samples for the target and shadow models. We use 80\% of the data as training set, resulting in 16,464 samples for training and 4,116 samples as test data. We evaluate both, target and shadow models, on the full test split to keep it as large as possible. To improve generalization, we apply random rotation in a range of 20 degrees and resize the images so that the smaller side has a length of 230 pixels. We then randomly crop out a square image with size 224 pixels and flip it horizontally with 50\% probability. Besides training, we only resize the inputs to 224 pixels on the shorter side and center crop to obtain a square image with size 224 pixels. 

The dataset and a list of its classes are available at \url{http://vision.stanford.edu/aditya86/ImageNetDogs/}.

\textbf{Animal Faces-HQ (AFHQ)} \citep{choi2020starganv2}: The AFHQ dataset contains 16,130 images of animal faces of size 512x512 and split into 14,630 training and 1,500 test samples. We use the training split for our experiments, providing 4,739 dog, 5,153 cat and 4,738 wild animal samples.

\textbf{STL-10} \citep{pmlr-v15-coates11a}: The STL-10 dataset is inspired by CIFAR-10 and contains 96x96 color images of ten different classes. The classes are identical to CIFAR-10, except class \textit{monkey}. We therefore remove all samples containing monkeys. The full dataset contains a total of 5,000 labeled training samples, 8,000 test images and 100,000 unlabeled images from similar distributions for unsupervised learning. We use the training set for out experiments.

\textbf{SVHN} \citep{svhn}: The Street View House Numbers (SVHN) dataset provides over 600,000 digit images of cropped house numbers in natural scene images. The dataset consists of 73,257 training and 26,032 test images. We use the training set for our experiments.

\textbf{Fake CIFAR-10}: We rely on a pre-trained class-conditional StyleGAN2 \cite{karras2020training} to generate synthetic CIFAR-10 samples. We create a balanced dataset of 2,500 synthetic images, 250 for each class. Pre-trained StyleGAN models are available at \url{https://github.com/NVlabs/stylegan2-ada-pytorch}.

\textbf{Fake AFHQ Dogs}: Similarly to Fake CIFAR-10, we also create 2,500 synthetic dog images using another pre-trained StyleGAN2 trained on AFHQ Dog images using adaptive discriminator augmentation. Note that since AFHQ does not provide fine-granular labels for dog breeds, images are generated randomly without defining the dog breeds.

\textbf{Permuted}: We generate noisy images by randomly permutating the pixels of the non-members from the CIFAR-10 and Stanford Dog test sets. The resulting images do no longer contain any structural information.

\textbf{Scaled}: We scale samples from the non-members test set after normalization by factor 255. Samples of this dataset follow our theorem and correspond to a scaling factor $\delta=255$.

\subsection{Evaluation Metrics}
For evaluating our experiments precision, recall, false-positive rate (FPR), and mean maximum prediction scores (MMPS) are used. The first three metrics are based on the count of true-positive (TP), false-positive (FP) and false-negative (FN) predictions made by MIAs. The MMPS relies on the prediction score vector $f(x)$ produced by a neural network $f$ given input $x$. $f(x)$ includes the application of a softmax function to compute the prediction scores. We chose precision, recall and FPR since MIAs can be interpreted as a binary classification task. We further computed the MMPS to examine the influence of the maximum prediction scores on MIA classification decisions. The formulas we used to calculate the evaluation metrics can be seen below:
\begin{itemize}
    \item $Precision=\frac{TP}{TP+FP}$
    \item $Recall=\frac{TP}{TP+FN}$
    \item $FPR=\frac{FP}{FP+TN}$
    \item $MMPS=\frac{1}{N} \sum_{n=1}^{N} \max_{i=1,...,d} f(x_n)_i$
\end{itemize}

\newpage

We computed the expected calibration error (ECE) \citep{Naeini2015} on ${K=15}$ bins using the respective test sets. The calibration error is then defined as
\begin{equation}
    ECE = \sum_{i=1}^K \frac{|B_i|}{N} |acc(B_i) - score(B_i)|
\end{equation}
where $|B_i|$ denotes the number of samples in the i-th bin, $N$ is the total number of test samples and $acc$ and $score$ are the accuracy and the mean predicted scores for the true class.

We further define the overconfidence error (OE) \citep{overconfidence_error} as 
\begin{equation}
    OE = \sum_{i=1}^K \frac{|B_i|}{N} \left[ score(B_i) \times \max(score(B_i) - acc(B_i), 0) \right].
\end{equation}
We again used $K=15$ to compute the OE for our experiments.

\section{Additional Experimental Results}\label{app:additional_experimental_results}
We state additional results from our experiments that did not fit into the main part due to page restrictions. 

\subsection{MMPS for Stanford Dogs Models}\label{app:dogs_mmc}
Table \ref{tab:add_mmc_all_attacks} states the mean maximum prediction scores (MMPS) of false-positive and true-negative membership predictions for the default ResNet-50 trained on Stanford Dogs.

\begin{table}[H]
\centering
\resizebox{0.75\columnwidth}{!}{
\begin{tabular}{llcc}
\toprule
\textbf{Dataset}                & \textbf{Attack}   & \textbf{FP MMPS}      & \textbf{TN MMPS} \\
\midrule
\multirow{3}{*}{Stanford Dogs}  & Entropy           & 0.9984                & 0.7565            \\
                                & Max. Score        & 0.9985                & 0.7580            \\
                                & Top-3 Scores      & 0.9979                & 0.7486            \\
\midrule
\multirow{3}{*}{Fake Dogs}      & Entropy           & 0.9977                & 0.7700            \\
                                & Max. Score        & 0.9979                & 0.7724            \\
                                & Top-3 Scores      & 0.9971                & 0.7648            \\
\midrule
\multirow{3}{*}{AFHQ Dogs}      & Entropy           & 0.9978                & 0.7636            \\
                                & Max. Score        & 0.9980                & 0.7661            \\
                                & Top-3 Scores      & 0.9974                & 0.7589            \\
\midrule 
\multirow{3}{*}{AFHQ Cats}      & Entropy           & 0.9972                & 0.7205            \\
                                & Max. Score        & 0.9972                & 0.7208            \\
                                & Top-3 Scores      & 0.9959                & 0.7137            \\
\midrule
\multirow{3}{*}{Permuted}       & Entropy           & 0.9989                & 0.8238            \\
                                & Max. Score        & 0.9990                & 0.8288            \\
                                & Top-3 Scores      & 0.9988 	            & 0.8235            \\
\midrule
\multirow{3}{*}{Scaled}         & Entropy           & 1.0000 	            & 0.8744            \\
                                & Max. Score        & 1.0000                & 0.8744            \\
                                & Top-3 Scores      & 1.0000                & 0.8744            \\
\bottomrule
\end{tabular}
}
\caption{MMPS for false-positive (FP) and true-negative (TN) predictions on the standard ResNet-50 model.}
\label{tab:add_mmc_all_attacks}
\end{table}

\newpage
\subsection{MMPS for Regularized Stanford Dogs Models}\label{app:dogs_regularized_mmc}
Table \ref{tab:add_regularized_dogs} states the mean maximum prediction scores (MMPS) of false-positive and true-negative membership predictions of the top-3 score attack against the various ResNet-50 models trained on Stanford Dogs. 

\begin{table}[h]
\centering
\resizebox{0.8\columnwidth}{!}{
\begin{tabular}{llcc}
\toprule
\textbf{Dataset}                & \textbf{ResNet-50}    & \textbf{FP MMPS}  & \textbf{TN MMPS} \\
\midrule
\multirow{4}{*}{Stanford Dogs}  & Standard              & 0.9979            & 0.7486            \\
                                & Label Smoothing       & 0.9092            & 0.4798            \\
                                & L2 Regularization     & 0.8678            & 0.4331            \\
                                & Temperature           & 0.1349            & 0.0577            \\
\midrule
\multirow{4}{*}{Fake Dogs}      & Standard              & 0.9971            & 0.7648            \\
                                & Label Smoothing       & 0.9036            & 0.4495            \\
                                & L2 Regularization     & 0.8568            & 0.4717            \\
                                & Temperature           & 0.1493            & 0.0775            \\
\midrule
\multirow{4}{*}{AFHQ Dogs}      & Standard              & 0.9974            & 0.7589            \\
                                & Label Smoothing       & 0.9164            & 0.4599            \\
                                & L2 Regularization     & 0.8581            & 0.4648            \\
                                & Temperature           & 0.1457            & 0.0754            \\
\midrule 
\multirow{4}{*}{AFHQ Cats}      & Standard              & 0.9959            & 0.7137            \\
                                & Label Smoothing       & 0.8931            & 0.3517            \\
                                & L2 Regularization     & 0.8414            & 0.3858            \\
                                & Temperature           & 0.0816            & 0.0450            \\
\midrule
\multirow{4}{*}{Permuted}       & Standard              & 0.9988            & 0.8235            \\
                                & Label Smoothing       & 0.9792            & 0.6169            \\
                                & L2 Regularization     & 0.8638 	        & 0.4699            \\
                                & Temperature           & 0.2328            & 0.1171            \\
\midrule
\multirow{4}{*}{Scaled}         & Standard              & 1.0000 	        & 0.8744            \\
                                & Label Smoothing       & 0.9907            & 0.7837            \\
                                & L2 Regularization     & 0.9988            & 0.5668            \\
                                & Temperature           & 0.5806            & 0.9912            \\
\bottomrule
\end{tabular}
}
\caption{MMPS for false-positive (FP) and true-negative predictions (TN)
on the various ResNet-50 models of the Top-3 scores attack. 
}
\label{tab:add_regularized_dogs}
\end{table}

\subsection{Attack Results on ResNet-18}\label{app:resnet_18_add_attack_result}
Table \ref{tab:model_attack_metrics_cifar} states additional training and attack metrics for ResNet-18 trained on CIFAR-10.

\begin{table}[h]
    \centering
    \resizebox{\columnwidth}{!}{
    \begin{tabular}{lc|cc|cc}
    \toprule
                                    & \multicolumn{1}{c}{}  & \multicolumn{2}{c}{\textbf{Calibration}}                                      & \multicolumn{2}{c}{\textbf{Defenses}} \\
    \textbf{ResNet-18}              & \textbf{Standard}     & \textbf{LS}                           & \textbf{LA}                           & \textbf{Temp}                         & \textbf{L2}                           \\
    \midrule
    Train Accuracy                  & 100.00\%              & 100.00\%                              & 100.00\%                              & 100.00\%                              & 68.48\%                               \\
    Test Accuracy                   & 69.38\%               & 71.94\%                               & 69.08\%                               & 69.38\%                               & 58.04\%                               \\
    ECE                             & \textbf{24.02\%}      & $\pmb{\downarrow}$\textbf{13.33\%}    & $\pmb{\downarrow}$\textbf{08.65\%}    & 22.04\%                               & 16.95\%                               \\
    OE                              & \textbf{22.02\%}      & $\pmb{\downarrow}$\textbf{11.20\%}    & $\pmb{\downarrow}$\textbf{00.00\%}    & 00.00\%                               & 12.88\%                               \\
    \midrule
    Entropy Pre                     & 67.35\%               & 79.60\%                               & 68.02\%                               & 63.47\%                               & 51.43\%                               \\
    Entropy Rec                     & 92.32\%               & 94.56\%                               & 53.76\%                               & 84.44\%                               & 28.72\%                               \\
    Entropy FPR                     & \textbf{44.76\%}      & $\pmb{\downarrow}$\textbf{24.24\%}    & $\pmb{\downarrow}$\textbf{25.28\%}    & 48.60\%                               & 27.12\%                               \\
    Entropy AUROC                   & \textbf{76.50\%}      & $\pmb{\uparrow}$\textbf{87.02\%}      & 73.46\%                               & $\pmb{\downarrow}$\textbf{70.86\%}    & $\pmb{\downarrow}$\textbf{50.82\%}    \\
    \midrule
    Max. Score Pre                  & 67.35\%               & 79.72\%                               & 69.36\%                               & 64.96\%                               & 51.40\%                               \\
    Max. Score Rec                  & 92.32\%               & 94.48\%                               & 63.92\%                               & 90.76\%                               & 28.64\%                               \\
    Max. Score FPR                  & \textbf{44.76\%}      & $\pmb{\downarrow}$\textbf{24.04\%}    & $\pmb{\downarrow}$\textbf{28.24\%}    & 48.96\%                               & 27.08\%                               \\
    Max. Score AUROC                & \textbf{77.50\%}      & $\pmb{\uparrow}$\textbf{87.10\%}      & 75.26\%                               & $\pmb{\downarrow}$\textbf{73.61\%}    & $\pmb{\downarrow}$\textbf{51.15\%}    \\
     \midrule
    Top-3 Scores Pre                & 63.84\%               & 76.14\%                               & 68.87\%                               & 67.93\%                               & 50.96\%                               \\
    Top-3 Scores Rec                & 98.04\%               & 99.44\%                               & 67.52\%                               & 91.16\%                               & 43.52\%                               \\
    Top-3 Scores FPR                & \textbf{55.52\%}      & $\pmb{\downarrow}$\textbf{31.16\%}    & $\pmb{\downarrow}$\textbf{30.52\%}    & 43.04\%                               & 41.88\%                               \\
    Top-3 Scores AUROC              & \textbf{77.14\%}      & $\pmb{\uparrow}$\textbf{89.05\%}      & 75.65\%                               & 80.35\%                               & $\pmb{\downarrow}$\textbf{50.64\%}    \\
    \bottomrule
    \end{tabular}
    }
    \caption{Training and attack metrics for ResNet-18 target models trained on CIFAR-10. We compare the results for the standard model to models trained with label smoothing (LS) and Laplace approximation (LA) as calibration techniques and temperature scaling (Temp) and L2 regularization as defense techniques.}
    \label{tab:model_attack_metrics_cifar}
\end{table}

\newpage
\subsection{False-positive rates for CIFAR-10 Models}\label{app:cifar10_fpr_results}
Table \ref{tab:high_fpr_cifar10} states the underlying numerical FPR results for the models trained on CIFAR-10. All three attacks tend to predict unknown samples falsely as members even on data different from the training data distribution. Figure \ref{fig:add_calibration_cifar_results} further plots the FPR for the CIFAR-10 models and the effect of Laplace approximation and label smoothing, respectively.


\begin{figure}[ht!]
    \begin{subfigure}[b]{0.49\linewidth}
        \centering
        \includegraphics[width=\linewidth]{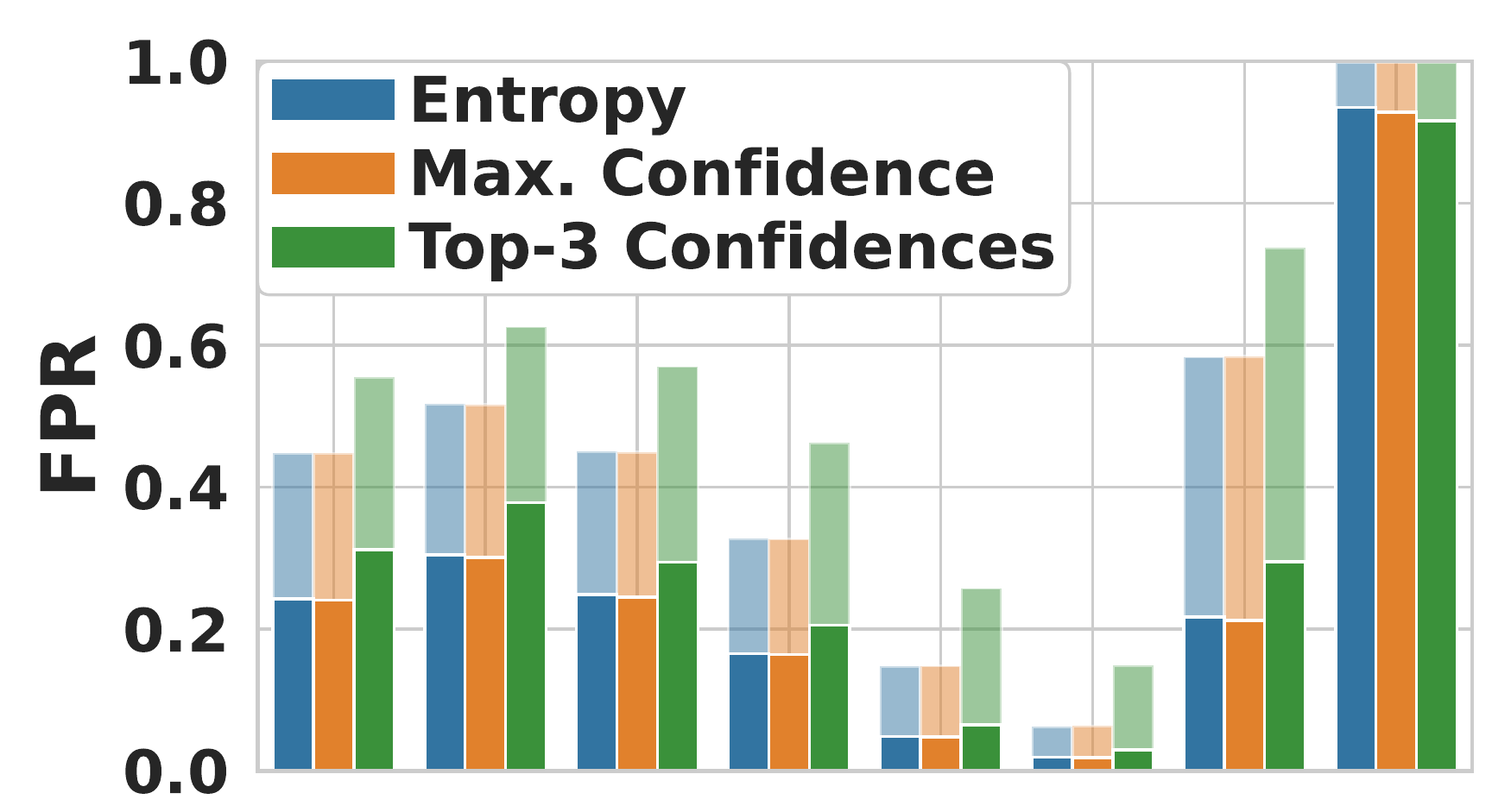}
        \captionsetup{justification=centering}
        \caption{ResNet-18 (LS)}
    \end{subfigure}
    \begin{subfigure}[b]{0.475\linewidth}
        \hfill
        \includegraphics[width=.92\linewidth, height=21.4mm]{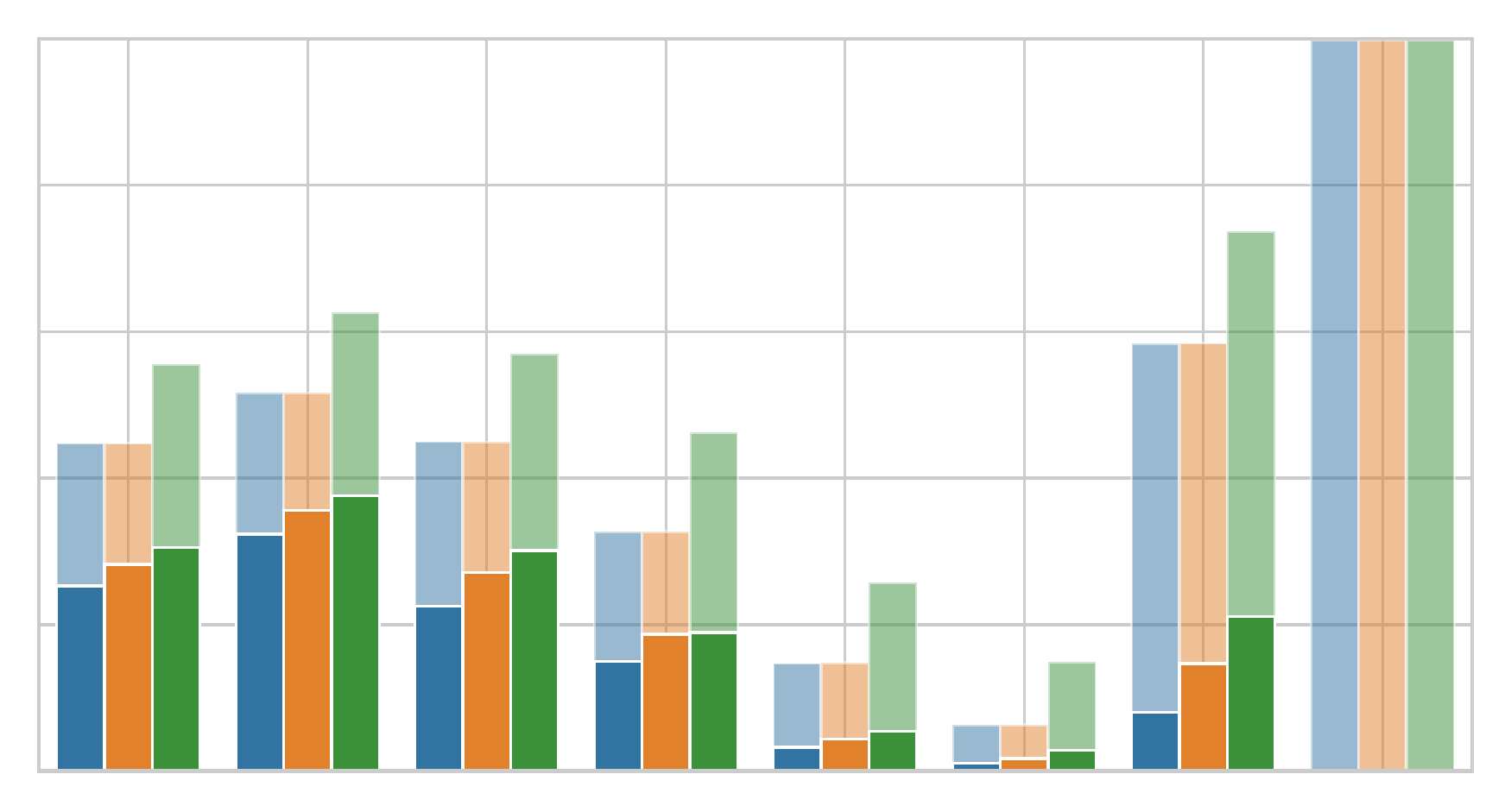}
        \captionsetup{justification=centering}
        \caption{ResNet-18 (LA)}
    \end{subfigure}
    \begin{subfigure}[b]{0.49\linewidth}
        \centering
        \includegraphics[width=\linewidth]{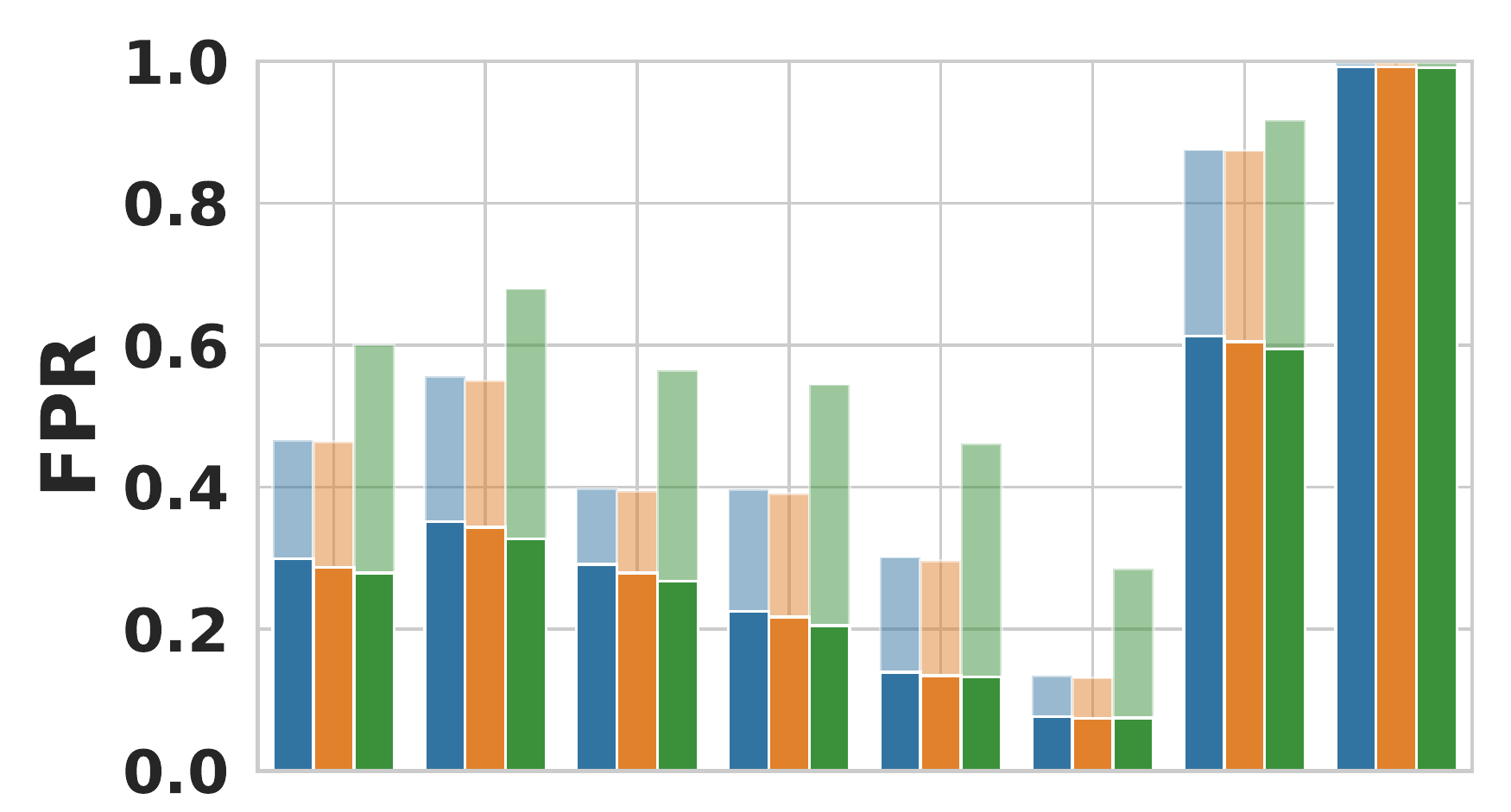}
        \captionsetup{justification=centering}
        \caption{SalemCNN (LS)}
    \end{subfigure}
    \begin{subfigure}[b]{0.475\linewidth}
        \hfill
        \includegraphics[width=.92\linewidth, height=21.4mm]{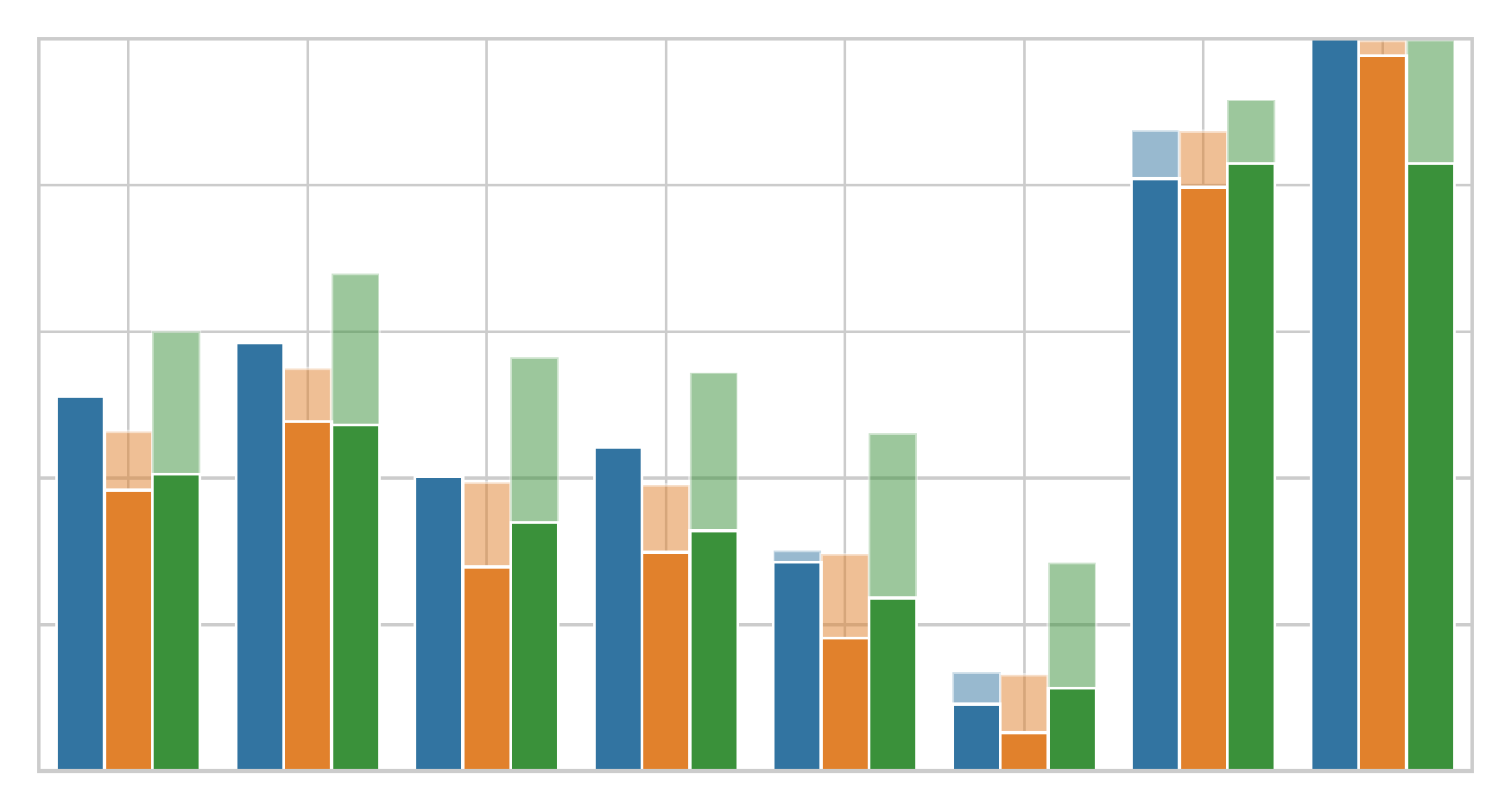}
        \captionsetup{justification=centering}
        \caption{SalemCNN (LA)}
    \end{subfigure}
    \begin{subfigure}[b]{0.49\linewidth}
        \centering
        \includegraphics[width=\linewidth]{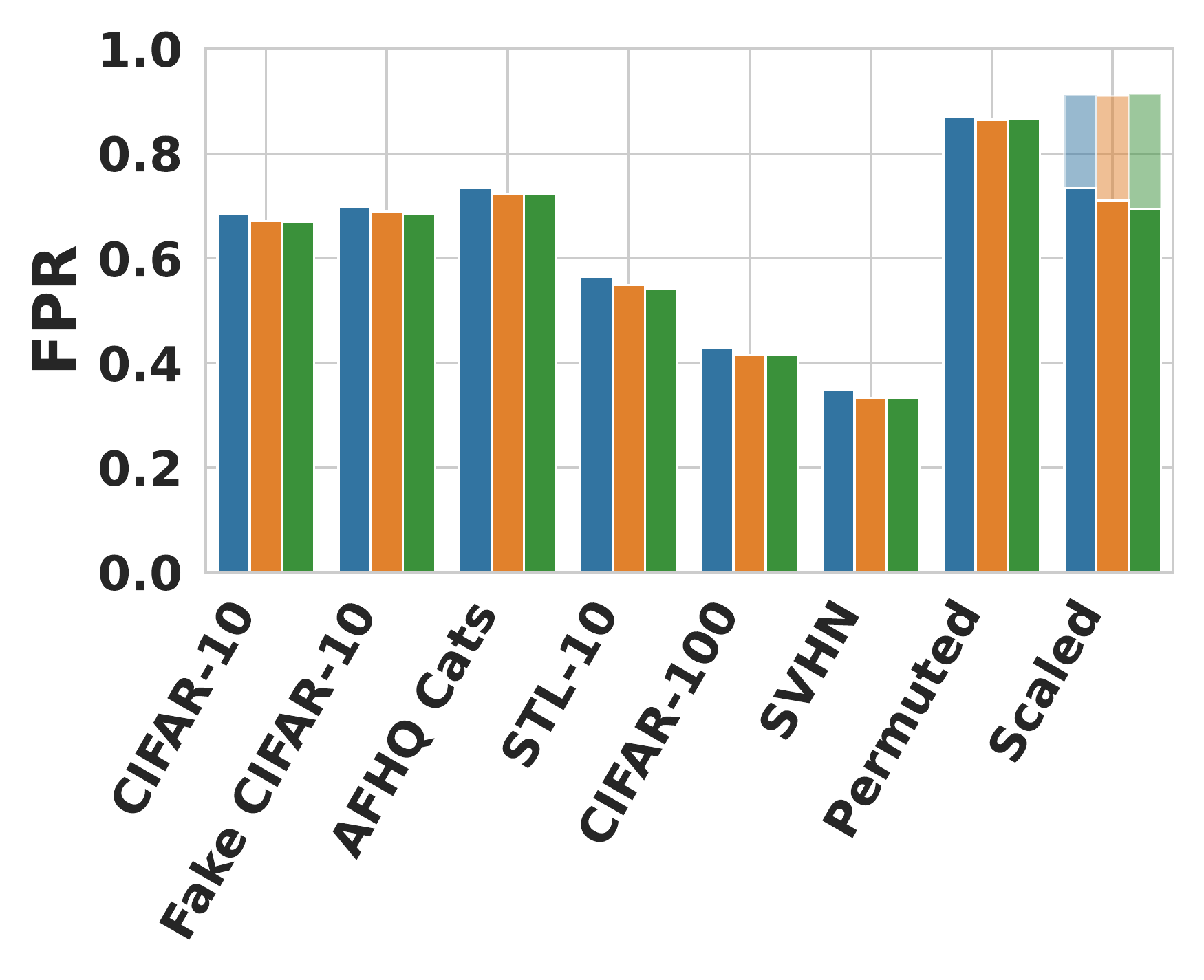}
        \captionsetup{justification=centering}
        \caption{EfficientNetB0 (LS)}
    \end{subfigure}
    \begin{subfigure}[b]{0.48\linewidth}
        \hfill
        \includegraphics[width=\linewidth]{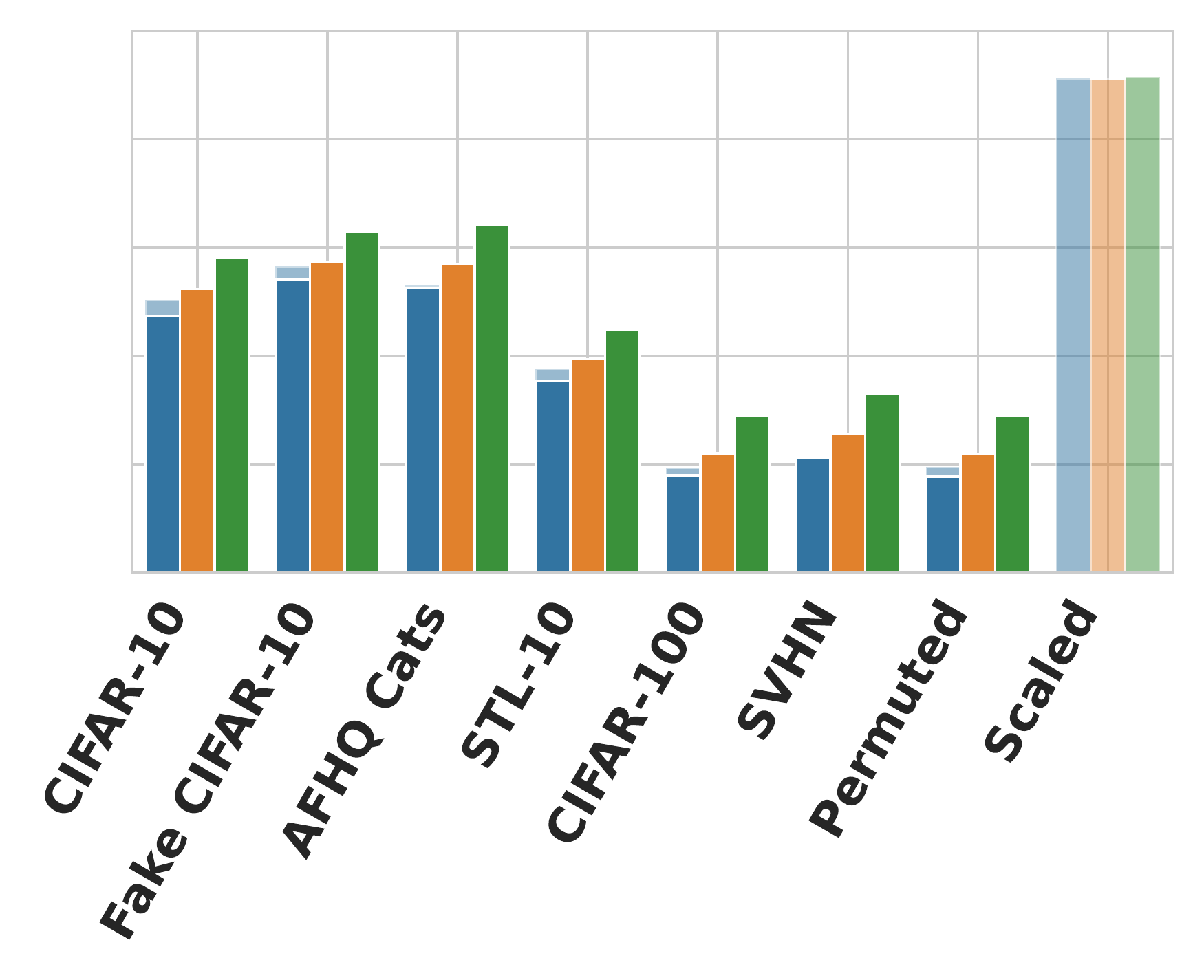}
        \captionsetup{justification=centering}
        \caption{EfficientNetB0 (LA)}
    \end{subfigure}
    \caption{False-positive rates of score-based membership inference attacks against ResNet-18, SalemCNN, and the EfficientNetB0 model trained on the CIFAR-10 dataset. The transparent bars represent the false-positive rate of the standard models while the solid bars represent the false-positive rate of the models with Laplace approximation (LA) or label smoothing (LS), respectively.}
    \label{fig:add_calibration_cifar_results}
\end{figure}

\subsection{MMPS for CIFAR-10 Models}\label{app:cifar_mmc}
Table \ref{tab:mmc_cifar10} states the mean maximum prediction scores (MMPS) of false-positive and true-negative membership predictions for CIFAR-10 models.

\begin{table*}[ht!]
    \centering
    \resizebox{1.75\columnwidth}{!}{
    \begin{tabular}{llcccccccc}
    \toprule
    \textbf{Attack}                     & \textbf{Architecture} & \textbf{CIFAR-10} & \textbf{Fake CIFAR-10}    & \textbf{AFHQ Cats}    & \textbf{STL-10}   & \textbf{CIFAR-100}    & \textbf{SVHN} & \textbf{Permuted} & \textbf{Scaled}   \\
    \midrule
    \multirow{3}{*}{Entropy}            & SalemCNN              & 46.60\%            & 55.60\%                    & 39.76\%                & 39.64\%            & 30.12\%                & 13.44\%        & 87.52\%            & 99.76\%            \\
                                        & ResNet-18             & 44.76\%            & 51.68\%                    & 45.00\%                & 32.72\%            & 14.72\%                & 6.28\%        & 58.36\%            & 99.84\%            \\
                                        & EfficientNetB0        & 50.36\%            & 56.52\%                    & 53.04\%                & 37.64\%            & 19.36\%                & 21.32\%        & 19.48\%            & 91.20\%            \\
    \midrule
    \multirow{3}{*}{Max. Score}         & SalemCNN              & 46.40\%            & 54.96\%                    & 39.40\%                & 39.08\%            & 29.60\%                & 13.12\%        & 87.40\%            & 99.76\%            \\
                                        & ResNet-18             & 44.76\%            & 51.64\%                    & 44.92\%                & 32.68\%            & 14.80\%                & 6.32\%        & 58.44\%            & 99.84\%            \\
                                        & EfficientNetB0        & 50.00\%            & 55.84\%                    & 52.44\%                & 37.28\%            & 19.00\%                & 20.76\%        & 19.24\%            & 91.04\%            \\
    \midrule
    \multirow{3}{*}{Top-3 Scores}       & SalemCNN              & 60.04\%            & 67.96\%                    & 56.48\%                & 54.44\%            & 46.12\%                & 28.48\%        & 91.68\%            & 99.80\%            \\
                                        & ResNet-18             & 55.52\%            & 62.60\%                    & 57.00\%                & 46.28\%            & 25.76\%                & 14.88\%        & 73.72\%            & 99.88\%            \\    
                                        & EfficientNetB0        & 53.40\%            & 58.68\%                    & 56.24\%                & 40.24\%            & 22.76\%                & 24.44\%        & 22.88\%            & 91.48\%            \\
    \bottomrule
    \end{tabular}}
    \caption{False-positive rates (FPR) for score-based membership inference attacks against standard CIFAR-10 target models.}
    \label{tab:high_fpr_cifar10}
\end{table*}

\begin{table*}[ht]
\centering
\resizebox{1.7\columnwidth}{!}{
\begin{tabular}{llcccccc}
\toprule
                                &                       & \multicolumn{2}{c}{\textbf{Entropy}}      & \multicolumn{2}{c}{\textbf{Max. Score}}       & \multicolumn{2}{c}{\textbf{Top-3 Scores}}    \\
                                &                       & \textbf{FP MMPS}      & \textbf{TN MMPS}  & \textbf{FP MMPS}      & \textbf{TN MMPS}      & \textbf{FP MMPS}      & \textbf{TN MMPS}         \\
\midrule
\multirow{3}{*}{CIFAR-10}       & SalemCNN              & 1.0000                & 0.9242            & 1.0000                & 0.9245                & 1.0000                & 0.8987                          \\
                                & ResNet-18             & 1.0000                & 0.8873            & 1.0000                & 0.8873                & 0.9999                & 0.8601                          \\
                                & EfficientNetB0        & 0.9999                & 0.8575            & 0.9999                & 0.8585                & 0.9998                & 0.8483                          \\
\midrule
\multirow{3}{*}{Fake CIFAR-10}  & SalemCNN              & 1.0000                & 0.9212            & 1.0000                & 0.9224                & 1.0000                & 0.8909                          \\
                                & ResNet-18             & 1.0000                & 0.8988            & 1.0000                & 0.8989                & 0.9999                & 0.8694                          \\
                                & EfficientNetB0        & 0.9999                & 0.8649            & 0.9999                & 0.8670                & 0.9998                & 0.8579                          \\
\midrule

\multirow{3}{*}{AFHQ Cats}      & SalemCNN              & 1.0000                & 0.9178            & 1.0000                & 0.9183                & 1.0000                & 0.8862                          \\
                                & ResNet-18             & 1.0000                & 0.9051            & 1.0000                & 0.9053                & 0.9999                & 0.8787                          \\
                                & EfficientNetB0        & 0.9999                & 0.8866            & 0.9999                & 0.8880                & 0.9998                & 0.8784                          \\
\midrule
\multirow{3}{*}{STL-10}         & SalemCNN              & 1.0000                & 0.9144            & 1.0000                & 0.9152                & 1.0000                & 0.8866                          \\
                                & ResNet-18             & 1.0000                & 0.8841            & 1.0000                & 0.8842                & 0.9999                & 0.8549                          \\
                                & EfficientNetB0        & 0.9999                & 0.8428            & 0.9999                & 0.8437                & 0.9998                & 0.8360                          \\
\midrule
\multirow{3}{*}{CIFAR-100}      & SalemCNN              & 1.0000                & 0.9089            & 1.0000                & 0.9095                & 1.0000                & 0.8819                          \\
                                & ResNet-18             & 1.0000                & 0.8560            & 1.0000                & 0.8559                & 0.9999                & 0.8346                          \\
                                & EfficientNetB0        & 0.9997                & 0.8258            & 0.9998                & 0.8266                & 0.9995                & 0.8182                          \\
\midrule
\multirow{3}{*}{SVHN}           & SalemCNN              & 1.0000                & 0.8926            & 1.0000                & 0.8930                & 0.9999                & 0.8700                          \\
                                & ResNet-18             & 1.0000                & 0.8394            & 1.0000                & 0.8393                & 0.9998                & 0.8232                          \\
                                & EfficientNetB0        & 0.9997                & 0.8301            & 0.9998                & 0.8313                & 0.9996                & 0.8231                          \\
\midrule
\multirow{3}{*}{Permuted}       & SalemCNN              & 1.0000                & 0.9405            & 1.0000                & 0.9407                & 1.0000                & 0.9022                          \\
                                & ResNet-18             & 1.0000                & 0.9323            & 1.0000                & 0.9322                & 0.9999                & 0.8956                          \\
                                & EfficientNetB0        & 0.9997                & 0.8086            & 0.9997                & 0.8092                & 0.9995                & 0.7995                          \\
\midrule
\multirow{3}{*}{Scaled}         & SalemCNN              & 1.0000                & 0.8329            & 1.0000                & 0.8329                & 1.0000                & 0.7995                          \\
                                & ResNet-18             & 1.0000                & 0.8926            & 1.0000                & 0.8926                & 1.0000                & 0.8568                          \\
                                & EfficientNetB0        & 1.0000                & 0.8922            & 1.0000                & 0.8941                & 1.0000                & 0.8887                          \\
\bottomrule
\end{tabular}}
\caption{Mean maximum prediction scores (MMPS) for false-positive (FP) and true-negative (TN) member predictions for standard CIFAR-10 models.}
\label{tab:mmc_cifar10}
\end{table*}

\subsection{Additional threshold-free metrics}\label{app:add_metrics}
We state additional threshold-free metrics, namely the area under the precision recall curve (AUPRC) and FPR@95\%TPR, in Table~\ref{tab:add_metrics}.

\begin{table*}[hbt!]
    \centering
    \resizebox{\textwidth}{!}{
    \begin{tabular}{l|c|cc|cc|c|cc|cc}
    \toprule
                                    & \multicolumn{5}{c|}{\textbf{ResNet-18}}                                                                            & \multicolumn{5}{c}{\textbf{ResNet-50}}                                                                             \\
                                    & \multicolumn{1}{c}{}  & \multicolumn{2}{c}{\textbf{Calibration}}  & \multicolumn{2}{c|}{\textbf{Defenses}}& \multicolumn{1}{c}{}  & \multicolumn{2}{c}{\textbf{Calibration}}  & \multicolumn{2}{c}{\textbf{Defenses}} \\
                                    & \textbf{Standard}     & \textbf{LS}       & \textbf{LA}           & \textbf{Temp}     & \textbf{L2}       & \textbf{Standard}     & \textbf{LS}       & \textbf{LA}           & \textbf{Temp}     & \textbf{L2}       \\ 
    \midrule 
    Entr. AUPRC                      & 69.44\%               & 77.42\%           & 65.33\%               & 62.68\%           & 49.95\%           & 71.50\%               & 81.32\%           & 70.96\%               & 57.29\%           & 58.99\%           \\
    Entr. FPR@95\%TPR               & 48.52\%               & 24.64\%           & 61.68\%               & 67.52\%           & 95.04\%           & 60.59\%               & 40.91\%           & 64.92\%               & 86.73\%           & 91.21\%           \\
    Max. AUPRC                       & 79.47\%               & 77.54\%           & 66.79\%               & 64.26\%           & 50.07\%           & 72.68\%               & 81.56\%           & 71.57\%               & 63.76\%           & 59.29\%           \\
    Max. FPR@95\%TPR                & 48.28\%               & 24.56\%           & 54.00\%               & 56.64\%           & 95.44\%           & 60.59\%               & 40.18\%           & 62.05\%               & 74.00\%           & 88.87\%           \\
    Top-3 AUPRC                      & 75.56\%               & 80.89\%           & 66.96\%               & 74.46\%           & 49.92\%           & 72.28\%               & 82.28\%           & 71.94\%               & 75.87\%           & 59.32\%           \\
    Top-3 FPR@95\%TPR               & 48.08\%               & 22.76\%           & 53.80\%               & 48.12\%           & 100.00\%          & 60.40\%               & 40.72\%           & 59.43\%               & 59.23\%           & 88.05\%           \\
    \bottomrule
    \end{tabular}
    }
    \caption{Additional attack results for the CIFAR-10 models (ResNet-18) and the Stanford Dog models (ResNet-50) with various modifications on their respective training dataset.}
    \label{tab:add_metrics}
\end{table*}

\end{document}